\DeclareMathOperator*{\argmin}{arg\,min}
\newcommand{\prob}{\mathbb{P}}
\newcommand{\expect}{\mathbb{E}}
\begin{document}

\title{Shift Happens: Adjusting Classifiers}

\author{Theodore James Thibault Heiser\orcidID{0000-0001-7057-3160} \and
Mari-Liis Allikivi\orcidID{0000-0002-1019-3454} \and
Meelis Kull\orcidID{0000-0001-9257-595X}}
\authorrunning{T. Heiser et al.}

\institute{Institute of Computer Science, University of Tartu, Tartu, Estonia\\
\email{\{mari-liis.allikivi,meelis.kull\}@ut.ee}}

\maketitle

\begin{abstract}
Minimizing expected loss measured by a proper scoring rule, such as Brier score or log-loss (cross-entropy), is a common objective while training a probabilistic classifier. If the data have experienced dataset shift where the class distributions change post-training, then often the model's performance will decrease, over-estimating the probabilities of some classes while under-estimating the others on average. We propose unbounded and bounded general adjustment (UGA and BGA) methods that transform all predictions to (re-)equalize the average prediction and the class distribution. These methods act differently depending on which proper scoring rule is to be minimized, and we have a theoretical guarantee of reducing loss on test data, if the exact class distribution is known. We also demonstrate experimentally that, when in practice the class distribution is known only approximately, there is often still a reduction in loss depending on the amount of shift and the precision to which the class distribution is known.

\keywords{Multi-class classification \and Proper scoring rule \and Adjustment.}
\end{abstract}


\section{Introduction}

Classical supervised machine learning is built on the assumption that the joint probability distribution that features and labels are sourced from does not change during the life cycle of the predictive model: from training to testing and deployment. However, in reality this assumption is broken more often than not: medical diagnostic classifiers are often trained with an oversampling of disease-positive instances, surveyors are often biased to collecting labelled samples from certain segments of a population, user demographics and preferences change over time on social media and e-commerce sites, etc.

While these are all examples of dataset shift, the nature of these shifts can be quite different.
There have been several efforts to create taxonomies of dataset shift \cite{storkey2009training,moreno2012unifying}. The field of \emph{transfer learning} offers many methods of learning models for scenarios with awareness of the shift during training. However, often the shift is not yet known during training and it is either too expensive or even impossible to retrain once the shift happens. There are several reasons for it: original training data or training infrastructure might not be available; shift happens so frequently that there is no time to retrain; the kind of shift is such that without having labels in the shifted context there is no hope of learning a better model than the original.

In this work we address multi-class classification scenarios where training a classifier for the shifted deployment context is not possible (due to any of the above reasons), and the only possibility is to post-process the outputs from an existing classifier that was trained before the shift happened.
To succeed, such post-processing must be guided by some information about the shifted deployment context. 
In the following, we will assume that we know the overall expected class distribution in the shifted context, at least approximately. 
For example, consider a medical diagnostic classifier of disease sub-types, which has been trained on the cases of country A, and gets deployed to a different country B. It is common that the distribution of sub-types can vary between countries, but in many cases such information is available. So here many labels are available but not the feature values (country B has data about sub-types in past cases, but no diagnostic markers were measured back then), 
making training of a new model impossible. Still, the model \emph{adjustment} methods proposed in this paper can be used to adjust the existing model to match the class distribution in the deployment context.
As another example, consider a bank's fraud detection classifier trained on one type of credit cards and deployed to a new type of credit cards. For new cards there might not yet be enough cases of fraud to train a new classifier, but there might be enough data to estimate the class distribution, that is the prevalence of fraud. The old classifier might predict too few or too many positives on the new data, so it must be adjusted to the new class distribution. 

In many application domains, including the above examples of medical diagnostics and fraud detection, it is required that the classifiers would output confidence information in addition to the predicted class. This is supported by most classifiers, as they can be requested to provide for each instance the class probabilities instead of a single label. For example, the feed-forward neural networks for classification typically produce class probabilities using the final soft-max layer. Such confidence information can then be interpreted by a human expert to choose the action based on the prediction, or feeded into an automatic cost-sensitive decision-making system, which would use the class probability estimates and the mis-classification cost information to make cost-optimal decisions. Probabilistic classifiers are typically evaluated using \emph{Brier score} or \emph{log-loss} (also known as \emph{squared error} and \emph{cross-entropy}, respectively). Both measures belong to the family of proper scoring rules: measures which are minimized by the true posterior class probabilities produced by the Bayes-optimal model. Proper losses also encourage the model to produce calibrated probabilities, as every proper loss decomposes into \emph{calibration loss} and \emph{refinement loss} \cite{kull2015novel}.

Our goal is to improve the predictions of a given model in a shifted deployment context, using the information about the expected class distribution in this context, without making any additional assumptions about the type of dataset shift. The idea proposed by Kull et al. \cite{kull2015novel} is to take advantage of a property that many dataset shift cases share: a difference in the classifier's average prediction and the expected class distribution of the data. They proposed two different \emph{adjustment procedures} which transform the predictions to re-equalise the average prediction with the expected class distribution, resulting in a theoretically guaranteed reduction of Brier score or log-loss. Interestingly, it turned out that different loss measures require different adjustment procedures. They proved that their proposed \emph{additive adjustment} (additively shifting all predictions, see Section~\ref{sec:background} for the definitions) is guaranteed to reduce Brier score, while it can increase log-loss in some circumstances. They also proposed \emph{multiplicative adjustment} (multiplicatively shifting and renormalising all predictions) which is guaranteed to reduce log-loss, while it can sometimes increase Brier score. It was proved that if the adjustment procedure is \emph{coherent} with the proper loss (see Section~\ref{sec:background}), then the reduction of loss is guaranteed, assuming that the class distribution is known exactly. They introduced the term \emph{adjustment loss} to refer to the part of calibration loss which can be eliminated by adjustment. Hence, adjustment can be viewed as a weak form of calibration.
In the end, it remained open: (1) whether for every proper scoring rule there exists an adjustment procedure that is guaranteed to reduce loss; (2) is there a general way of finding an adjustment procedure to reduce a given proper loss; (3) whether this reduction of loss from adjustment materializes in practice where the new class distribution is only known approximately; (4) how to solve algorithm convergence issues of the multiplicative adjustment method; (5) how to solve the problem of additive adjustment sometimes producing predictions with negative 'probabilities'.

The contributions of our work are the following: (1) we construct a family called BGA (Bounded General Adjustment) of adjustment procedures, with one procedure for each proper loss, and prove that each BGA procedure is guaranteed to reduce the respective proper loss, if the class distribution of the dataset is known; (2) we show that each BGA procedure can be represented as a convex optimization task, leading to a practical and tractable algorithm; (3) we demonstrate experimentally that even if the new class distribution is only known approximately, the proposed BGA methods still usually improve over the unadjusted model; (4) we prove that the BGA procedure of log-loss is the same as multiplicative adjustment, thus solving the convergence problems of multiplicative adjustment; (5) we construct another family called UGA (Unbounded General Adjustment) with adjustment procedures that are dominated by the respective BGA methods according to the loss, but are theoretically interesting by being coherent to the respective proper loss in the sense of Kull et al. \cite{kull2015novel}, and by containing the additive adjustment procedure as the UGA for Brier score.
 

Section 2 of this paper provides the background for this work, covering the specific types of dataset shift and reviewing some popular methods of adapting to them. We also review the family of proper losses, i.e. the loss functions that adjustment is designed for. Section 3 introduces the UGA and BGA families of adjustment procedures and provides the theoretical results of the paper. Section 4 provides experimental evidence for the effectiveness of BGA adjustment in practical settings. 
Section 5 concludes the paper, reviewing its contributions and proposing open questions.

\section{Background and Related Work}\label{sec:background}


\subsection{Dataset Shift and Prior Probability Adjustment}

In supervised learning, dataset shift can be defined as any change in the joint probability distribution of the feature vector $X$ and label $Y$ between two data generating processes, that is $\prob_{old}(X,Y) \neq \prob_{new}(X,Y)$, where $\prob_{old}$ and $\prob_{new}$ are the probability distributions before and after the shift, respectively.
While the proposed adjustment methods are in principle applicable for any kind of dataset shift, there are differences in performance across different types of shift. 
According to Moreno-Torres et al \cite{moreno2012unifying} there are 4 main kinds of shift: covariate shift, prior probability shift, concept shift and other types of shift.
\emph{Covariate shift} is when the distribution $\prob(X)$ of the covariates/features changes, but the posterior class probabilities $P(Y\vert X)$ do not. At first, this may not seem to be of much interest since the classifiers output estimates of posterior class probabilities and these remain unshifted. However, unless the classifier is Bayes-optimal, then covariate shift can still result in a classifier under-performing \cite{storkey2009training}. Many cases of covariate shift can be modelled as sample selection bias \cite{hein2009binary}, often addressed by retraining the model on a reweighted training set \cite{sugiyama2007covariate,shimodaira2000improving,gretton2009covariate}.
%
\emph{Prior probability shift} is when the prior class probabilities $\prob(Y)$ change, but the likelihoods $\prob(X\vert Y)$ do not. An example of this is down- or up-sampling of the instances based on their class in the training or testing phase. Given the new class distribution, the posterior class probability predictions can be modified according to Bayes' theorem to take into account the new prior class probabilities, as shown in \cite{saerens2002adjusting}. We will refer to this procedure as the \emph{Prior Probability Adjuster} (PPA) and the formal definition is as follows:
$$\text{PPA:}\qquad\mathbb{P}_{new} (Y{=}\,y \vert X) = \frac{\mathbb{P}_{old} (Y{=}\,y \vert X) \prob_{new}(Y{=}\,y) / \prob_{old}(Y{=}\,y)}{\sum_{y'} \mathbb{P}_{old} (Y{=}\,y' \vert X) \prob_{new}(Y{=}\,y') / \prob_{old}(Y{=}\,y') } $$
In \emph{other types of shift} both conditional probability distributions $\prob(X\vert Y)$ and $\prob(Y\vert X)$ change.
The special case where $\prob(Y)$ or $\prob(X)$ remains unchanged is called \emph{concept shift}. Concept shift and other types of shift are in general hard to adapt to, as the relationship between $X$ and $Y$ has changed in an unknown way.

\subsection{Proper Scoring Rules and Bregman Divergences}

The best possible probabilistic classifier is the Bayes-optimal classifier which for any instance $X$ outputs its true posterior class probabilities $\prob(Y\vert X)$. 
When choosing a loss function for evaluating probabilistic classifiers, it is then natural to require that the loss would be minimized when the predictions match the correct posterior probabilities. Loss functions with this property are called proper scoring rules \cite{dawid2007geometry,merkle2013choosing,kull2015novel}.
Note that throughout the paper we consider multi-class classification with $k$ classes and represent class labels as one-hot vectors, i.e. the label of class $i$ is a vector of $k-1$ zeros and a single $1$ at position $i$. 
%
\begin{definition}[Proper Scoring Rule (or Proper Loss)]
In a $k$-class classification task a loss function $f:[0,1]^k\times\{0,1\}^k\to\mathbb{R}$ is called a \emph{proper scoring rule} (or \emph{proper loss}), if for any probability vectors $p,q\in[0,1]^k$ with $\sum_{i=1}p_i=1$ and $\sum_{i=1}q_i=1$ the following inequality holds:
$$\expect_{Y \sim q}[f(q,Y)] \leq \expect_{Y \sim q}[f(p,Y)]$$ 
where $Y$ is a one-hot encoded label randomly drawn from the categorical distribution over $k$ classes with class probabilities represented by vector $q$. 
The loss function $f$ is called \emph{strictly proper} if the inequality is strict for all $p\neq q$.
\end{definition}
%
This is a useful definition, but it does not give a very clear idea of what the geometry of these functions looks like. Bregman divergences \cite{bregman1967relaxation} were developed independently of proper scoring rules and have a constructive definition (note that many authors have the arguments $p$ and $q$ the other way around, but we use this order to match proper losses).
%
\begin{definition}[Bregman Divergence]
Let $\phi : \Omega \to \mathbb{R}$ be a strictly convex function defined on a convex set $\Omega \subseteq \mathbb{R}^k$ such that $\phi$ is differentiable on the relative interior of $\Omega$, $ri(\Omega)$. The Bregman divergence $d_\phi : ri(\Omega) \times \Omega \to [0, \infty )$ is defined as 
$$d_\phi (p,q) = \phi (q) - \phi(p) - \langle q - p , \nabla \phi (p) \rangle$$ 
\end{definition}
Previous works \cite{banerjee2005optimality} have shown that the two concepts are closely related. Every Bregman divergence is a strictly proper scoring rule and every strictly proper scoring rule (within an additive constant) is a Bregman divergence. Best known functions in these families are 
\emph{squared Euclidean distance} defined as 
$d_{SED}(\mathbf{p}, \mathbf{q}) = \sum_{j=1}^d (p_j - q_j)^2$
and \emph{Kullback-Leibler-divergence} 
$d_{KL}(\mathbf{p}, \mathbf{q}) = \sum_{j=1}^{d} q_j \log \frac{q_j}{p_j}$.
When used as a scoring rule to measure loss of a prediction against labels, they are typically referred to as \emph{Brier Score} $d_{BS}$, and \emph{log-loss} $d_{LL}$, respectively.

\subsection{Adjusted Predictions and Adjustment Procedures}

Let us now come to the main scenario of this work, where dataset shift of unknown type occurs after a probabilistic $k$-class classifier has been trained. Suppose that we have a test dataset with $n$ instances from the post-shift distribution. We denote the predictions of the existing probabilistic classifier on these data by $p\in[0,1]^{n\times k}$, where $p_{ij}$ is the predicted class $j$ probability on the $i$-th instance, and hence $\sum_{j=1}^k p_{ij}=1$. We further denote the hidden actual labels in the one-hot encoded form by $y\in\{0,1\}^{n\times k}$, where $y_{ij}=1$ if the $i$-th instance belongs to class $j$, and otherwise $y_{ij}=0$.
While the actual labels are hidden, we assume that the overall class distribution $\pi\in[0,1]^k$ is known, where $\pi_j=\frac{1}{n}\sum_{i=1}^n y_{ij}$. The following theoretical results require $\pi$ to be known exactly, but in the experiments we demonstrate benefits from the proposed adjustment methods also in the case where $\pi$ is known approximately. As discussed in the introduction, examples of such scenarios include medical diagnostics and fraud detection. Before introducing the adjustment procedures we define what we mean by \emph{adjusted predictions}.

\begin{definition}[Adjusted Predictions]
Let $p\in[0,1]^{n\times k}$ be the predictions of a probabilistic $k$-class classifier on $n$ instances and let $\pi\in[0,1]^k$ be the actual class distribution on these instances. We say that \emph{predictions $p$ are adjusted on this dataset}, if the average prediction is equal to the class proportion for every class $j$, that is
$\frac{1}{n}\sum_{i=1}^n p_{ij}=\pi_j$.
\end{definition}

Essentially, the model provides adjusted predictions on a dataset, if for each class its predicted probabilities on the given data are on average neither under- nor over-estimated. Note that this definition was presented in \cite{kull2015novel} using random variables and expected values, and our definition can be viewed as a finite case where a random instance is drawn from the given dataset.

Consider now the case where the predictions are not adjusted on the given test dataset, and so the estimated class probabilities are on average too large for some class(es) and too small for some other class(es). This raises a question of whether the overall loss (as measured with some proper loss) could be reduced by shifting all predictions by a bit, for example with additive shifting by adding the same constant vector $\varepsilon$ to each prediction vector $p_{i\cdot}$. The answer is not obvious as in this process some predictions would also be moved further away from their true class. This is in some sense analogous to the case where a regression model is on average over- or under-estimating its target, as there also for some instances the predictions would become worse after shifting. However, additive shifting still pays off, if the regression results are evaluated by mean squared error. This is well known from the theory of linear models where mean squared error fitting leads to an intercept value such that the average predicted target value on the training set is equal to the actual mean target value (unless regularisation is applied). Since Brier score is essentially the same as mean squared error, it is natural to expect reduction of Brier score after additive shifting of predictions towards the actual class distribution. This is indeed so, and \cite{kull2015novel} proved that \emph{additive adjustment} guarantees a reduction of Brier score. Additive adjustment is a method which adds the same constant vector to all prediction vectors to achieve equality between average prediction vector and actual class distribution.

\begin{definition}[Additive Adjustment]
\emph{Additive adjustment} is the function $\alpha_{+}:[0,1]^{n\times k}\times [0,1]^k\to [0,1]^{n\times k}$ which takes in the predictions of a probabilistic $k$-class classifier on $n$ instances and the actual class distribution $\pi$ on these instances, and outputs adjusted predictions $a=\alpha_{+}(p,\pi)$ defined as
$a_{i\cdot}=p_{i\cdot}+(\varepsilon_1,\dots,\varepsilon_k)$
where $a_{i\cdot}=(a_{i1},\dots,a_{ik})$, $p_{i\cdot}=(p_{i1},\dots,p_{ik})$, and $\varepsilon_j=\pi_j-\frac{1}{n}\sum_{i=1}^n p_{ij}$ for each class $j\in\{1,\dots,k\}$.
\end{definition}

It is easy to see that additive adjustment procedure indeed results in adjusted predictions, as $\frac{1}{n}\sum_{i=1}^n a_{ij}=\frac{1}{n}\sum_{i=1}^n p_{ij}+\varepsilon_j=\pi_j$. 
Note that even if the original predictions $p$ are probabilities between $0$ and $1$, the additively adjusted predictions $a$ can sometimes go out from that range and be negative or larger than $1$. For example, if an instance $i$ is predicted to have probability $p_{ij}=0$ to be in class $j$ and at the same time on average the overall proportion of class $j$ is over-estimated, then $\varepsilon_j<0$ and the adjusted prediction $a_{ij}=\varepsilon_j$ is negative. While such predictions are no longer probabilities in the standard sense, these can still be evaluated with Brier score. So it is always true that the overall Brier score on adjusted predictions is lower than on the original predictions, $\frac{1}{n}d_{BS}(a_{i\cdot},y_{i\cdot})\leq \frac{1}{n}d_{BS}(p_{i\cdot},y_{i\cdot})$, 
where the equality holds only when the original predictions are already adjusted, that is $a=p$.
Note that whenever we mention the guaranteed reduction of loss, it always means (even if we do not re-emphasise it) that there is no reduction in the special case where the predictions are already adjusted, since then adjustment has no effect.

Additive adjustment is just one possible transformation of unadjusted predictions into adjusted predictions, and there are infinitely many other such transformations. 
We will refer to these as \emph{adjustment procedures}. If we have explicitly required the output values to be in the range $[0,1]$ then we use the term \emph{bounded adjustment procedure}, otherwise we use the term \emph{unbounded adjustment procedure}, even if actually the values do not go out from that range.

\begin{definition}[Adjustment Procedure]
\emph{Adjustment procedure} is any function $\alpha:[0,1]^{n\times k}\times [0,1]^k\to [0,1]^{n\times k}$ which takes as arguments the predictions $p$ of a probabilistic $k$-class classifier on $n$ instances and the actual class distribution $\pi$ on these instances, such that for any $p$ and $\pi$ the output predictions $a=\alpha(p,\pi)$ are adjusted, that is $\frac{1}{n}\sum_{i=1}^n a_{ij}=\pi_j$ for each class $j\in\{1,\dots,k\}$.
\end{definition}

In this definition and also in the rest of the paper we assume silently, that $p$ contains valid predictions of a probabilistic classifier, and so for each instance $i$ the predicted class probabilities add up to $1$, that is $\sum_{j=1}^k p_{ij}=1$. Similarly, we assume that $\pi$ contains a valid class distribution, with $\sum_{j=1}^k\pi_j=1$.

\begin{definition}[Bounded Adjustment Procedure]
An adjustment procedure $\alpha:[0,1]^{n\times k}\times [0,1]^k\to [0,1]^{n\times k}$ is \emph{bounded}, if for any $p$ and $\pi$ the output predictions $a=\alpha(p,\pi)$ are in the range $[0,1]$, that is $a_{ij}\in[0,1]$ for all $i,j$.
\end{definition}

An example of a bounded adjustment procedure is the \emph{multiplicative adjustment} method proposed in \cite{kull2015novel}, which multiplies the prediction vector component-wise with a constant weight vector and renormalizes the result to add up to 1.

\begin{definition}[Multiplicative Adjustment]
\emph{Multiplicative adjustment} is the function $\alpha_{*}:[0,1]^{n\times k}\times [0,1]^k\to [0,1]^{n\times k}$ which takes in the predictions of a probabilistic $k$-class classifier on $n$ instances and the actual class distribution $\pi$ on these instances, and outputs adjusted predictions $a=\alpha_{*}(p,\pi)$ defined as 
$a_{ij}=\frac{w_j p_{ij}}{z_i}$,
where $w_1,\dots,w_k\geq 0$ are real-valued weights chosen based on $p$ and $\pi$ such that the predictions $\alpha_{*}(p,\pi)$ would be adjusted, and $z_i$ are the renormalisation factors defined as $z_i=\sum_{j=1}^k w_j p_{ij}$. 
\end{definition}

As proved in \cite{kull2015novel}, the suitable class weights $w_1,\dots,w_k$ are guaranteed to exist, but finding these weights is a non-trivial task and the algorithm based on coordinate descent proposed in \cite{kull2015novel} can sometimes fail to converge. 
In the next Section~\ref{sec:theory} we will propose a different and more reliable algorithm for multiplicative adjustment.

It turns out that the adjustment procedure should be selected depending on which proper scoring rule is aimed to be minimised. It was proved in \cite{kull2015novel} that Brier score is guaranteed to be reduced with additive adjustment and log-loss with multiplicative adjustment. It was shown that when the 'wrong' adjustment method is used, then the loss can actually increase. In particular, additive adjustment can increase log-loss and multiplicative adjustment can increase Brier score. A sufficient condition for a guaranteed reduction of loss is \emph{coherence} between the adjustment procedure and the proper loss corresponding to a Bregman divergence.
Intuitively, coherence means that the effect of adjustment is the same across instances, where the effect is measured as the difference of divergences of this instance from any fixed class labels $j$ and $j'$. The exact definition is the following:

\begin{definition}[Coherence of Adjustment Procedure and Bregman Divergence \cite{kull2015novel}]
Let $\alpha:[0,1]^{n\times k}\times [0,1]^k\to [0,1]^{n\times k}$ be an adjustment procedure and $d_\phi$ be a Bregman divergence. 
Then $\alpha$ is called to be coherent with $d_\phi$ if and only if for any predictions $p$ and class distribution $\pi$ the following holds for all $i=1,\dots,n$ and $j,j'=1,\dots,k$:
$$\left(d_\phi(a_{i\cdot},c_j)-d_\phi(p_{i\cdot},c_j)\right)
 -\left(d_\phi(a_{i\cdot},c_{j'})-d_\phi(p_{i\cdot},c_{j'})\right)
 =const_{j,j'}$$
where $const_{j,j'}$ is a quantity not depending on $i$, and where $a=\alpha(p,\pi)$ and $c_j$ is a one-hot encoded vector corresponding to class $j$ (with $1$ at position $j$ and $0$ everywhere else).
\end{definition}

The following result can be seen as a direct corollary of Theorem~4 in \cite{kull2015novel}.

\begin{theorem}[Decomposition of Bregman Divergences \cite{kull2015novel}]\label{thm:decomp}
Let $d_\phi$ be a Bregman divergence and let $\alpha:[0,1]^{n\times k}\times [0,1]^k\to [0,1]^{n\times k}$ be an adjustment procedure coherent with $d_\phi$.
Then for any predictions $p$, one-hot encoded true labels $y\in\{0,1\}^{n\times k}$ and class distribution $\pi$ (with $\pi_j=\frac{1}{n}\sum_{i=1}^n y_{ij}$) the following decomposition holds:
\begin{align} \label{eq:decomp}
\frac{1}{n}\sum_{i=1}^n d_\phi(p_{i\cdot},y_{i\cdot}) =
\frac{1}{n}\sum_{i=1}^n d_\phi(p_{i\cdot},a_{i\cdot}) +
\frac{1}{n}\sum_{i=1}^n d_\phi(a_{i\cdot},y_{i\cdot})
\end{align}
\end{theorem}

Due to non-negativity of $d_\phi$ this theorem gives a guaranteed reduction of loss, that is the loss on the adjusted probabilities $a$ (average divergence between $a$ and $y$) is less than the loss on the original unadjusted probabilities (average divergence between $p$ and $y$), unless the probabilities are already adjusted ($p=a$). As additive adjustment can be shown to be coherent with the squared Euclidean distance and multiplicative adjustment with KL-divergence \cite{kull2015novel}, the respective guarantees of loss reduction follow from Theorem~\ref{thm:decomp}.

However, from \cite{kull2015novel} it remained open: (1) whether for every proper loss there exists an adjustment procedure that is guaranteed to reduce loss; (2) is there a general way of finding an adjustment procedure to reduce a given proper loss; (3) whether this reduction of loss from adjustment materializes in practice where the new class distribution is only known approximately; (4) how to solve algorithm convergence issues of the multiplicative adjustment method; (5) how to solve the problem of additive adjustment sometimes producing predictions with negative 'probabilities'.
We will answer these questions in the next section.

\section{General Adjustment} \label{sec:theory}


The contributions of our work are the following: (1) we construct a family called BGA (Bounded General Adjustment) of adjustment procedures, with one procedure for each proper loss, and prove that each BGA procedure is guaranteed to reduce the respective proper loss, if the class distribution of the dataset is known; (2) we show that each BGA procedure can be represented as a convex optimization task, leading to a practical and tractable algorithm; (3) we demonstrate experimentally that even if the new class distribution is only known approximately, the proposed BGA methods still usually improve over the unadjusted model; (4) we prove that the BGA procedure of log-loss is the same as multiplicative adjustment, thus solving the convergence problems of multiplicative adjustment; (5) we construct another family called UGA (Unbounded General Adjustment) with adjustment procedures that are dominated by the respective BGA methods according to the loss, but are theoretically interesting by being coherent to the respective proper loss in the sense of Kull et al. \cite{kull2015novel}, and by containing the additive adjustment procedure as the UGA for Brier score.

Our main contribution is a family of adjustment procedures called BGA (Bounded General Adjustment). We use the term 'general' to emphasise that it is not a single method, but a family with exactly one adjustment procedure for each proper loss. We will prove that every adjustment procedure of this family is guaranteed to reduce the respective proper loss, assuming that the true class distribution is known exactly. To obtain more theoretical insights and answer the open questions regarding coherence of adjustment procedures with Bregman divergences and proper losses, we define a weaker variant of BGA called UGA (Unbounded General Adjustment). As the name says, these methods can sometimes output predictions that are not in the range $[0,1]$. On the other hand, the UGA procedures turn out to be coherent with their corresponding divergence measure, and hence have the decomposition stated in Theorem~\ref{thm:decomp} and also guarantee reduced loss. However, UGA procedures have less practical value, as each UGA procedure is dominated by the respective BGA in terms of reductions in loss. We start by defining the UGA procedures, as these are mathematically simpler.


\subsection{Unbounded General Adjustment (UGA)}

We work here with the same notations as introduced earlier, with $p$ denoting the $n\times k$ matrix with the outputs of a $k$-class probabilistic classifier on a test dataset with $n$ instances, and $y$ denoting the matrix with the same shape containing one-hot encoded actual labels. 
We denote the unknown true posterior class probabilities $\prob(Y\vert X)$ on these instances by $q$, again a matrix with the same shape as $p$ and $y$. 

Our goal is to reduce the loss $\frac{1}{n}\sum_{i=1}^n d_\phi(p_{i\cdot},y_{i\cdot})$ knowing the overall class distribution $\pi$, while not having any other information about labels $y$. Due to the defining property of any proper loss, the expected value of this quantity is minimised at $p=q$. As we know neither $y$ nor $q$, we consider instead the set of all possible predictions $Q_\pi$ that are adjusted to $\pi$, that is
$Q_\pi=\left\{a\in \mathbb{R}^{n \times k}\ \middle\vert\ \frac{1}{n}\sum_{i=1}^n a_{i,j} = \pi_j,\ \sum_{j=1}^k a_{i,j} = 1\right\}$.
Note that here we do not require $a_{ij}\geq 0$, as in this subsection we are working to derive unbounded adjustment methods which allow predictions to go out from the range $[0,1]$. 

The question is now whether there exists a prediction matrix $a\in Q_\pi$ that is better than $p$ (i.e. has lower divergence from $y$) regardless of what the actual labels $y$ are (as a sidenote, $y$ also belongs to $Q_\pi$). 
It is not obvious that such $a$ exists, as one could suspect that for any $a$ there exists some bad $y$ such that the original $p$ would be closer to $y$ than the `adjusted' $a$ is.


Now we will define UGA and prove that it outputs adjusted predictions $a^\star$ that are indeed better than $p$, regardless of what the actual labels $y$ are. 

\begin{definition}[Unbounded General Adjuster (UGA)]
Consider a $k$-class classification task with a test dataset of $n$ instances, and let $d_\phi$ be a Bregman divergence. Then the \emph{unbounded general adjuster corresponding to $d_\phi$} is the function $\alpha^\star : \mathbb{R}^{n \times k} \times \mathbb{R}^k \to \mathbb{R}^{n \times k}$ defined as follows:
\begin{equation*}
\alpha^\star(p, \pi) = \argmin_{a\in Q_\pi} \frac{1}{n}\sum_{i=1}^n d_\phi (p_{i\cdot}, a_{i\cdot})
\end{equation*}
\end{definition}

The definition of UGA is correct in the sense that the optimisation task used to define it has a unique optimum. This is because it is a convex optimisation task, as will be explained in Section~\ref{sec:implementation}.
Intuitively, $Q_\pi$ can be thought of as an infinite hyperplane of adjusted predictions, also containing the unknown $y$. The original prediction $p$ is presumably not adjusted, so it does not belong to $Q_\pi$. UGA essentially `projects' $p$ to the hyperplane $Q_\pi$, in the sense of finding $a$ in the hyperplane which is closest from $p$ according to $d_\phi$, see the diagram in Figure~\ref{fig:diagram}.

\begin{figure}[t] 
  \centering
  \includegraphics[scale=0.3]{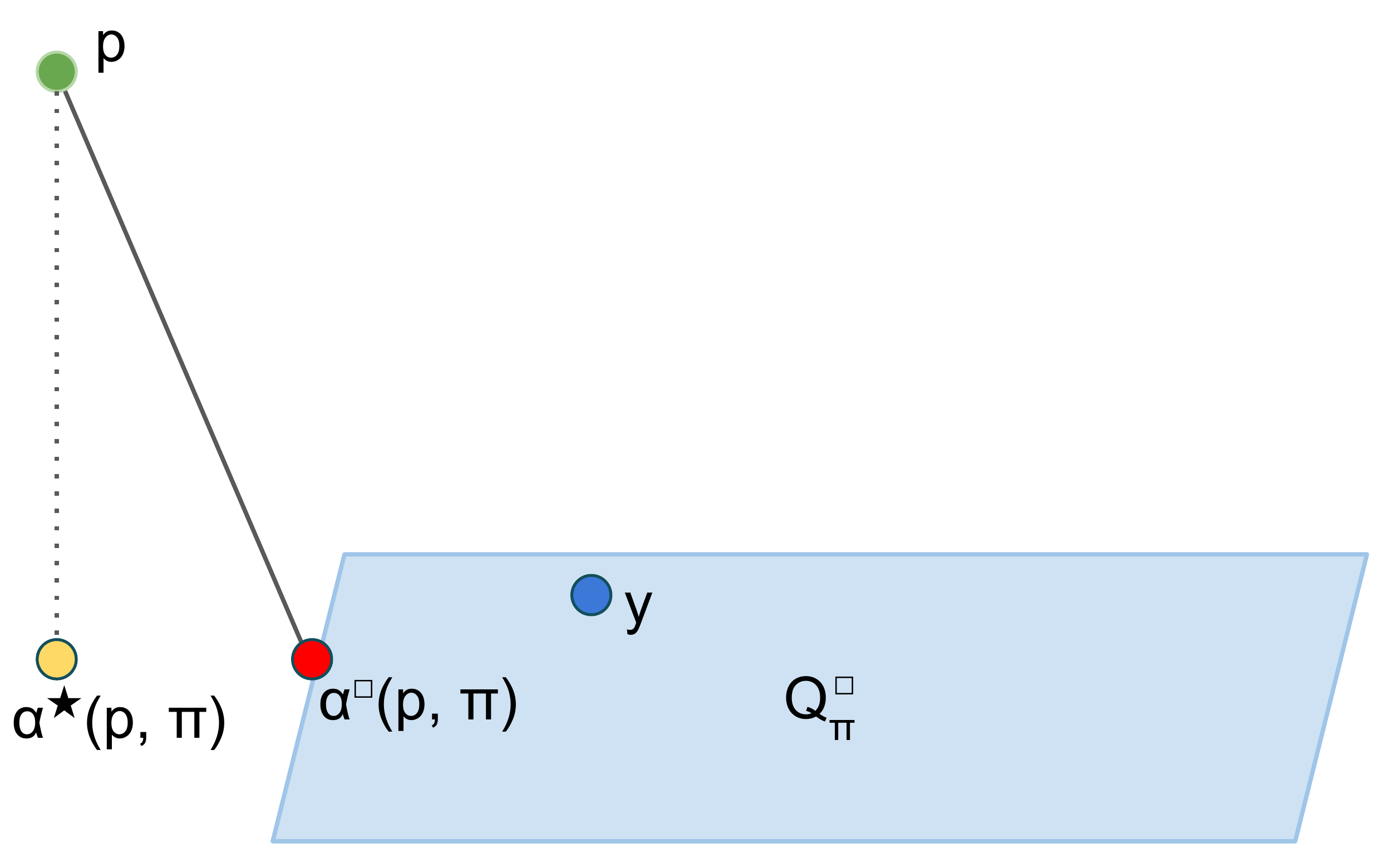}\quad
  \caption{A schematic explanation with $\alpha^\star(p,\pi)$ of UGA and $\alpha^\square(p,\pi)$ of BGA.} \label{fig:diagram}
\end{figure}

The following theorem guarantees that the loss is reduced after applying UGA by showing that UGA is coherent with its Bregman divergence.

\begin{theorem}
Let $\alpha^\star$ be the unbounded general adjuster corresponding to the Bregman divergence $d_\phi$.
Then $\alpha^\star$ is coherent with $d_\phi$.
\end{theorem}
\begin{proof}
The proof of this and following theorems are given in the Supplementary.
\end{proof}

The next theorem proves that UGA is actually the one and only adjustment procedure that decomposes in the sense of Theorem~\ref{thm:decomp}. Therefore, UGA coincides with additive and multiplicative adjustment on Brier score and log-loss, respectively.

\begin{theorem}
Let $d_\phi$ be a Bregman divergence, let $p$ be a set of predictions, and $\pi$ be a class distribution over $k$ classes. Suppose $a\in Q_\pi$ is such that for any $y\in Q_\pi$ the decomposition of Eq.(\ref{eq:decomp}) holds.
Then $a=\alpha^\star(p,\pi)$.
\end{theorem}

As explained in the example of additive adjustment (which is UGA for Brier score), some adjusted predictions can get out from the range $[0,1]$. 
It is clear that a prediction involving negative probabilities cannot be optimal. In the following section we propose the Bounded General Adjuster (BGA) which does not satisfy the decomposition property but is guaranteed to be at least as good as UGA.

\subsection{Bounded General Adjustment}

For a given class distribution $\pi$, let us constrain the set of all possible adjusted predictions $Q_\pi$ further, by requiring that all probabilities are non-negative: 
$$Q^\square_\pi=\{a\in Q_\pi\mid a_{i,j}\geq 0\ \text{for}\ i=1,\dots,n\ \text{and}\ j=1,\dots,k\}$$
We now propose our bounded general adjuster (BGA), which outputs predictions within $Q^\square_\pi$.

\begin{definition}[Bounded General Adjuster (BGA)]
Consider a $k$-class classification task with a test dataset of $n$ instances, and let $d_\phi$ be a Bregman divergence. Then the \emph{bounded general adjuster corresponding to $d_\phi$} is the function $\alpha^\square : [0,1]^{n \times k} \times [0,1]^k \to [0,1]^{n \times k}$ defined as follows:
\begin{equation*}
\alpha^\square(p, \pi) = \argmin_{a\in Q^\square_\pi} \frac{1}{n}\sum_{i=1}^n d_\phi (p_{i\cdot}, a_{i\cdot})
\end{equation*}
\end{definition}


Similarly as for UGA, the correctness of BGA is guaranteed by the convexity of the optimisation task, as shown in Section~\ref{sec:implementation}.
BGA solves almost the same optimisation task as UGA, except that instead of considering the whole hyperplane $Q_\pi$ it finds the closest $a$ within a bounded subset $Q^\square_\pi$ within the hyperplane. Multiplicative adjustment is the BGA for log-loss, because log-loss is not defined at all outside the $[0,1]$ bounds, and hence the UGA for log-loss is the same as the BGA for log-loss. 
The following theorem shows that there is a guaranteed reduction of loss after BGA, and the reduction is at least as big as after UGA.

\begin{theorem}
Let $d_\phi$ be a Bregman divergence, let $p$ be a set of predictions, and $\pi$ be a class distribution over $k$ classes. Then for any $y\in Q^\square_\pi$ the following holds:
\begin{align*}
&\sum_{i=1}^n (d_\phi(p_{i\cdot},y_{i\cdot}) - d_\phi(a_{i\cdot}^\square, y_{i\cdot})) \\
&\geq\sum_{i=1}^n d_\phi(p_{i\cdot}, a_{i\cdot}^\square) \geq 
\sum_{i=1}^n d_\phi(p_{i\cdot}, a_{i\cdot}^\star)=
\sum_{i=1}^n (d_\phi(p_{i\cdot},y_{i\cdot}) - d_\phi(a_{i\cdot}^\star, y_{i\cdot}))
\end{align*}
\end{theorem}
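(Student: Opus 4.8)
The plan is to read the displayed chain as three separate claims and dispatch them with increasing effort: the rightmost equality, the middle inequality, and then the leftmost inequality, which is the heart of the statement. Throughout I will write $a^\square = \alpha^\square(p,\pi)$ and $a^\star = \alpha^\star(p,\pi)$, and I will repeatedly lean on the \emph{three-point identity} for Bregman divergences, namely that for any $p,a,y$ one has
\begin{equation*}
d_\phi(p,y) - d_\phi(p,a) - d_\phi(a,y) = \langle y - a,\ \nabla\phi(a) - \nabla\phi(p)\rangle ,
\end{equation*}
which follows by expanding all three divergences from the definition and observing that every $\phi$ term cancels, leaving only the linear parts.

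The rightmost equality is just the decomposition of Theorem~\ref{thm:decomp} rewritten. Since the preceding theorem established that $\alpha^\star$ is coherent with $d_\phi$, and since $y\in Q^\square_\pi\subseteq Q_\pi$ satisfies $\pi_j=\frac{1}{n}\sum_i y_{ij}$, Eq.~(\ref{eq:decomp}) applies to $a^\star$; moving the $d_\phi(a^\star_{i\cdot},y_{i\cdot})$ term across gives exactly $\sum_i d_\phi(p_{i\cdot},a^\star_{i\cdot}) = \sum_i\bigl(d_\phi(p_{i\cdot},y_{i\cdot}) - d_\phi(a^\star_{i\cdot},y_{i\cdot})\bigr)$. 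The middle inequality is a one-line consequence of nesting: because $Q^\square_\pi\subseteq Q_\pi$, the point $a^\square$ is feasible for the optimisation defining $\alpha^\star$, and since $a^\star$ minimises over the larger set $Q_\pi$ we immediately obtain $\sum_i d_\phi(p_{i\cdot},a^\star_{i\cdot}) \leq \sum_i d_\phi(p_{i\cdot},a^\square_{i\cdot})$.

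The leftmost inequality is the generalised Pythagorean inequality for the Bregman projection of $p$ onto the convex set $Q^\square_\pi$, and this is where the three-point identity does the real work. Applying it with $a=a^\square$ converts the target inequality $\sum_i\bigl(d_\phi(p_{i\cdot},y_{i\cdot}) - d_\phi(a^\square_{i\cdot},y_{i\cdot})\bigr) \geq \sum_i d_\phi(p_{i\cdot},a^\square_{i\cdot})$ into the single assertion $\sum_i \langle y_{i\cdot} - a^\square_{i\cdot},\ \nabla\phi(a^\square_{i\cdot}) - \nabla\phi(p_{i\cdot})\rangle \geq 0$. I would obtain this from the first-order optimality condition for the convex program $\min_{a\in Q^\square_\pi}\sum_i d_\phi(p_{i\cdot},a_{i\cdot})$: the objective is convex (as argued in Section~\ref{sec:implementation}), $Q^\square_\pi$ is convex, and the gradient of the objective in its second argument is $\nabla\phi(a_{i\cdot}) - \nabla\phi(p_{i\cdot})$, so the minimiser $a^\square$ obeys the variational inequality $\sum_i\langle \nabla\phi(a^\square_{i\cdot}) - \nabla\phi(p_{i\cdot}),\ y_{i\cdot} - a^\square_{i\cdot}\rangle \geq 0$ for every feasible $y\in Q^\square_\pi$. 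That is exactly the needed sign, and summing the three-point identities over $i$ closes the argument. The same computation reproves the rightmost equality as a sanity check: replacing $Q^\square_\pi$ by the affine set $Q_\pi$, the feasible directions $y_{i\cdot}-a^\star_{i\cdot}$ span a linear subspace, so the variational inequality holds with both signs and collapses to equality.

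The main obstacle I anticipate is \emph{differentiability of $\phi$ on the boundary}. For divergences such as log-loss, $\phi$ is a negative entropy whose gradient diverges as a coordinate tends to $0$, so if the BGA minimiser $a^\square$ lands on a face of $Q^\square_\pi$ with a vanishing coordinate, then $\nabla\phi(a^\square_{i\cdot})$ is undefined and both the three-point identity and the variational inequality above need care. I would handle this either by restricting to the case where $a^\square$ stays in the relative interior (which holds, for example, for log-loss whenever $p$ and $\pi$ have strictly positive entries, so multiplicative adjustment uses strictly positive weights), or more robustly by replacing $\nabla\phi(a^\square_{i\cdot})$ with a subgradient and invoking the KKT/normal-cone form of the optimality condition, which yields the same inequality $\langle y - a^\square,\ g - \nabla\phi(p)\rangle \geq 0$ for a suitable subgradient $g$. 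For Bregman divergences whose $\phi$ is differentiable on all of $\Omega$, such as Brier score, this subtlety does not arise at all.
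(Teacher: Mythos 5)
Your proof is correct and structurally matches the paper's: the same three ingredients (a three-point expansion of the divergence difference, first-order optimality of the projection $a^\square$, and the feasibility nesting $Q^\square_\pi \subseteq Q_\pi$) appear in the same roles. The differences are in packaging. Your three-point identity is exactly the paper's Lemma~\ref{lem:one} combined with Lemma~\ref{lem:two}. For the key leftmost inequality, the paper's Lemma~\ref{lem:four} derives the sign condition $\sum_i \langle y_{i\cdot} - a^\square_{i\cdot},\ \nabla\phi(a^\square_{i\cdot}) - \nabla\phi(p_{i\cdot})\rangle \ge 0$ by writing out KKT multipliers for the constraints of $Q^\square_\pi$ and using complementary slackness ($\psi_{ij}\ge 0$, $\psi_{ij}a^\square_{ij}=0$), whereas you invoke the abstract variational inequality for a minimizer of a convex function over a convex set; these are equivalent, and your version is shorter and avoids the explicit multiplier bookkeeping. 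For the rightmost equality, note one scope issue: your first route (coherence of $\alpha^\star$ plus Theorem~\ref{thm:decomp}) only covers one-hot label matrices $y$, while the theorem quantifies over all $y \in Q^\square_\pi$; your ``sanity check'' argument --- that over the affine set $Q_\pi$ the feasible directions form a subspace, so the variational inequality holds with both signs and collapses to equality --- is what actually proves the claim in full generality, and it is precisely the paper's Lemma~\ref{lem:three} (proved there with Lagrange multipliers). The middle inequality is handled identically in both. Finally, your concern about differentiability of $\phi$ at the boundary is well taken: the paper's Lemma~\ref{lem:four} silently assumes $\nabla\phi(a^\square_{i\cdot})$ exists even when $a^\square$ has zero entries, which fails, e.g., for log-loss; your subgradient/normal-cone fix (or an interiority assumption) is a genuine improvement in rigor over the paper's own argument.
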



Note that the theorem is even more general than we need and holds for all $y\in Q^\square_\pi$, not only those $y$ which represent label matrices.
A corollary of this theorem is that the BGA for Brier score is a new adjustment method dominating over additive adjustment in reducing Brier score.
In practice, all practitioners should prefer BGA over UGA when looking to adjust their classifiers. Coherence and decomposition are interesting from a theoretical perspective but from a loss reduction standpoint, BGA is superior to UGA.

\subsection{Implementation} \label{sec:implementation}

Both UGA and BGA are defined through optimisation tasks, which can be shown to be convex. First, the objective function is convex as a sum of convex functions (Bregman divergences are convex in their second argument \cite{bauschke2001joint}). Second, the equality constraints that define $Q_\pi$ are linear, making up a convex set. Finally, the inequality constraints of $Q^\square_\pi$ make up a convex set, which after intersecting with $Q_\pi$ remains convex. These properties are sufficient \cite{boyd2004convex} to prove that both the UGA and BGA optimisation tasks are convex.

UGA has only equality constraints, so Newton's method works fine with it. For Brier score there is a closed form solution \cite{kull2015novel} of simply adding the difference between the new distribution and the old distribution for every set of k probabilities.
%
%
%
%
BGA computations are a little more difficult due to inequality constraints, therefore requiring interior point methods \cite{boyd2004convex}. 
While multiplicative adjustment is for log-loss both BGA and UGA at the same time, it is easier to calculate it as a UGA, due to not having inequality constraints.

%

\section{Experiments}

\subsection{Experimental Setup}

While our theorems provide loss reduction guarantees when the exact class distribution is known, this is rarely something to be expected in practice. Therefore, the goal of our experiments was to evaluate the proposed adjustment methods in the setting where the class distribution is known approximately. For loss measures we selected Brier score and log-loss, which are the two most well known proper losses. As UGA is dominated by BGA, we decided to evaluate only BGA, across a wide range of different types of dataset shift, across different classifier learning algorithms, and across many datasets. We compare the results of BGA with the prior probability adjuster (PPA) introduced in Section~\ref{sec:background}, as this is to our knowledge the only existing method that inputs nothing else than the predictions of the model and the shifted class distribution. As reviewed in 
\cite{weiss2016survey}, other existing transfer learning methods need either the information about the features or a limited set of labelled data from the shifted context.

To cover a wide range of classifiers and datasets, we opted for using OpenML \cite{OpenML2013}, which contains many datasets, and for each dataset many runs of different learning algorithms. For each run OpenML provides the predicted class probabilities in 10-fold cross-validation. 
As the predictions in 10-fold cross-validation are obtained with 10 different models trained on different subsets of folds, we compiled the prediction matrix $p$ from one fold at a time. 
%
From OpenML we downloaded all user-submitted sets of predictions for both binary and multiclass (up to eight classes) classification tasks, restricting ourselves to tasks with the number of instances in the interval of $[2000,1000000]$. Then we discarded every dataset that included a predicted score outside the range $(0,1)$. To emphasize, we did not include runs which contain a $0$ or a $1$ anywhere in the predictions, since log-loss becomes infinite in case of errors with full confidence. We discarded datasets with less than $500$ instances and sampled datasets with more than $1000$ instances down to $1000$ instances. This left us with 590 sets of predictions, each from a different model. 
These 590 sets of predictions come from 59 different runs from 56 different classification tasks. 
The list of used datasets and the source code for running the experiments is available in the Supplemental Material. 


\paragraph{Shifting.}

For each dataset we first identified the majority class(es). After sorting the classes by size decreasingly, the class(es) $1,\dots,m$ were considered as majority class(es), where $j$ was the smallest possible integer such that $\pi_1+\dots+\pi_m>0.5$. We refer to other class(es) as minority class(es).
We then created $4$ variants of each dataset by artificially inducing shift in four ways. Each of those shifts has a parameter $\varepsilon\in[0.1,0.5]$ quantifying the amount of shift, and $\varepsilon$ was chosen uniformly randomly and independently for each adjustment task.

The first method induces prior probability shift by undersampling the majority class(es), reducing their total proportion from $\pi_1+\dots+\pi_m$ to $\pi_1+\dots+\pi_m-\varepsilon$. The second method induces a variety of concept shift by selecting randomly a proportion $\varepsilon$ of instances from majority class(es) and changing their labels into uniformly random minority class labels. The third method induces covariate shift by deleting within class $m$ the proportion $\varepsilon$ of the instances with the lowest values of the numeric feature which correlates best with this class label. The fourth method was simply running the other three methods all one after another, which produces an other type of shift. 

\paragraph{Approximating the New Class Distribution.}

It is unlikely that a practitioner of adjustment would know the exact class distribution of a shifted dataset. To investigate this, we ran our adjustment algorithms on our shifted datasets with not only the exact class distribution, but also eight `estimations' of the class distribution obtained by artificially modifying the correct class distribution $(\pi_1,\dots,\pi_k)$ into $(\pi_1+\delta,\dots,\pi_m+\delta,\pi_{m+1}-\delta',\dots,\pi_k-\delta'$, where $\delta$ was one of eight values $+0.01, -0.01, +0.02, -0.02, +0.04, -0.04, +0.08, -0.08$, and $\delta'$ was chosen to ensure that the sum of class proportions adds up to $1$. 
If any resulting class proportion left the [0,1] bounds, then the respective adjustment task was skipped.
In total, we recorded results for 17527 adjustment tasks resulting from combinations of dataset fold, shift amount, shift method, and estimated class distribution.

\paragraph{Adjustment.}

For every combination of shift and for the corresponding nine different class distribution estimations, we adjusted the datasets/predictions using the three above-mentioned adjusters: Brier-score-minimizing-BGA, log-loss-minimizing-BGA, and PPA. PPA has a simple implementation, but for the general adjusters we used the CVXPY library \cite{cvxpy} to perform convex optimization. 
For Brier-score-minimizing-BGA, the selected method of optimization was OSQP (as part of the CVXPY library). 
For log-loss-minimizing-BGA, we used the ECOS optimizer
with the SCS 
optimizer as backup (under rare conditions the optimizers could numerically fail, occurred 30 times out of 17527). For both Brier score and log loss, we measured the unadjusted loss and the loss after running the dataset through the aforementioned three adjusters. 

\begin{figure}[t] 
  \centering
  \includegraphics[height=5.7cm]{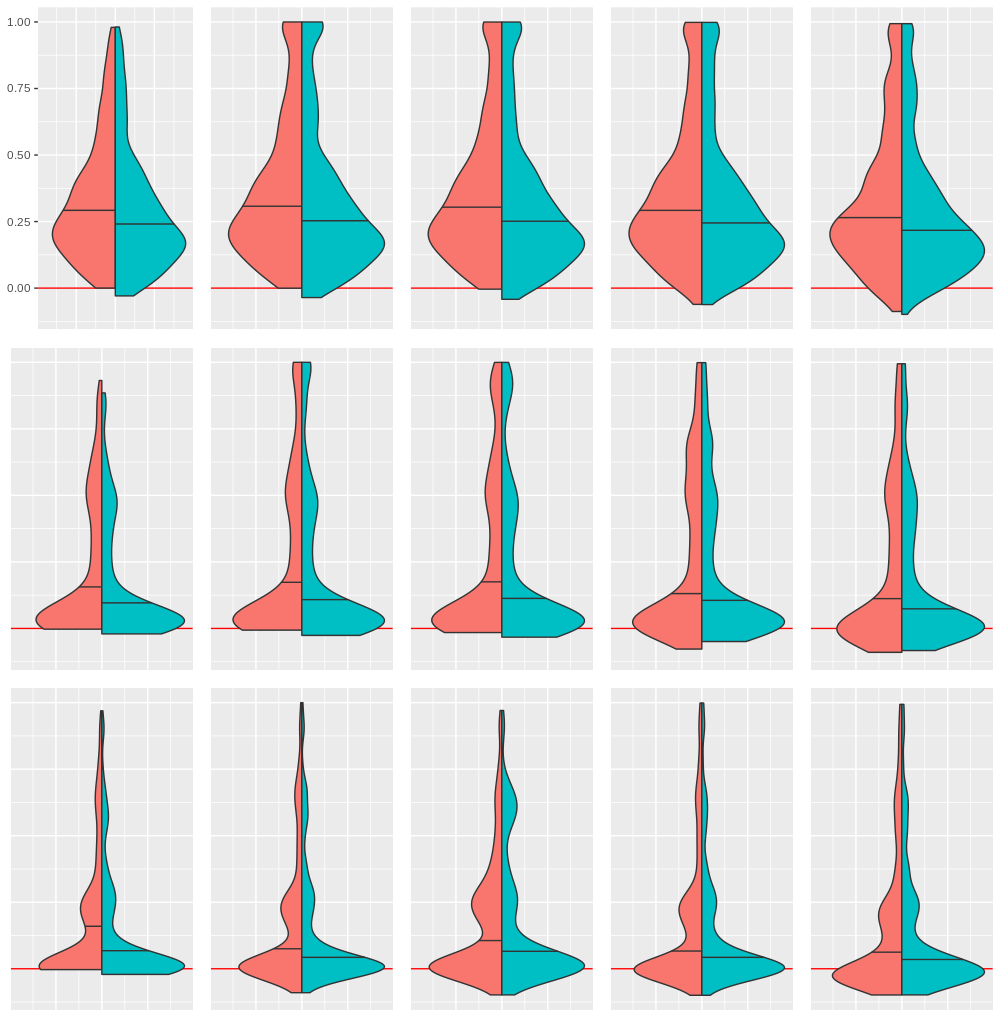}\quad\:
  \includegraphics[height=5.7cm]{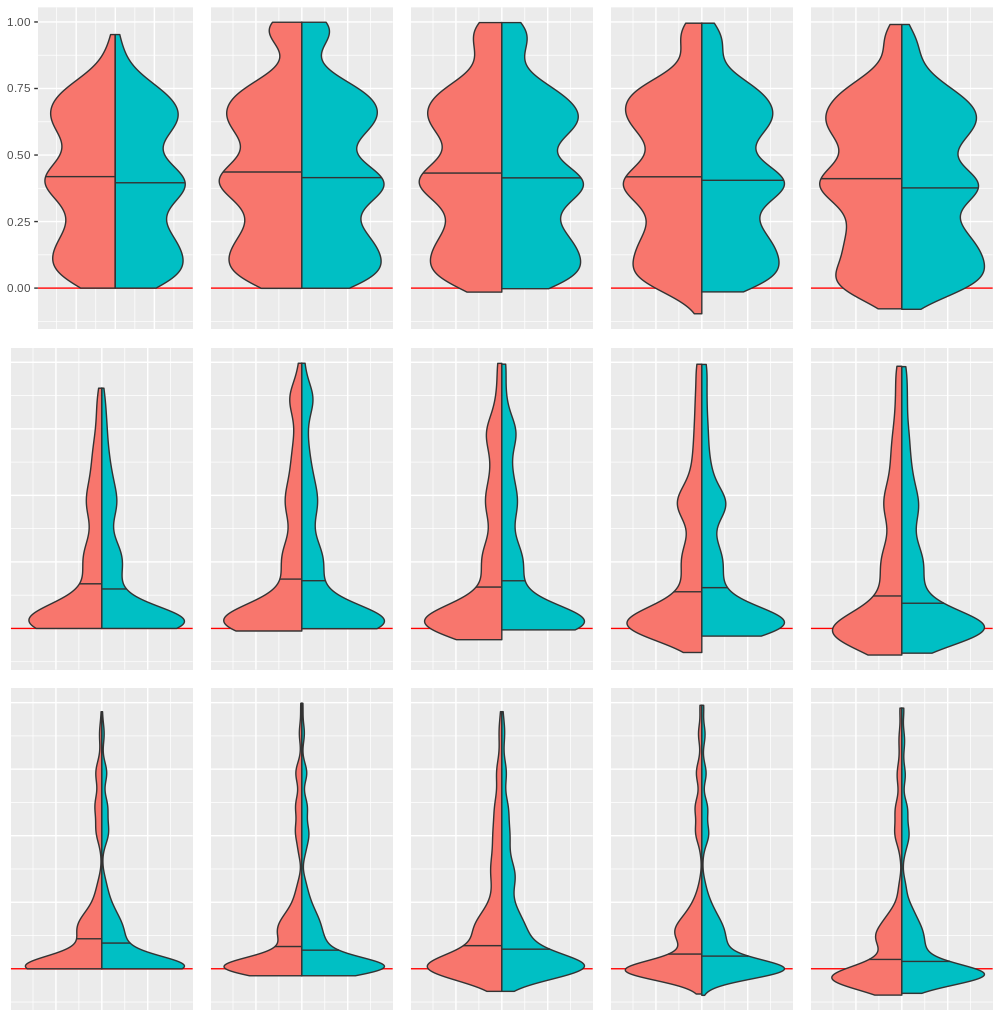}
  \caption{The reduction in Brier score (left figure) and log-loss (right figure) after BGA adjustment (left side of the violin) and after PPA adjustment (right side of the violin). The rows correspond to different amounts of shift (with high shift at the top and low at the bottom). The columns correspond to amount of induced error in class distribution estimation, starting from left: 0.00, 0.01, 0.02, 0.04 and 0.08.}\label{fig:res1} 
\end{figure}

\begin{figure}[t]
  \centering
  \includegraphics[height=5.6cm]{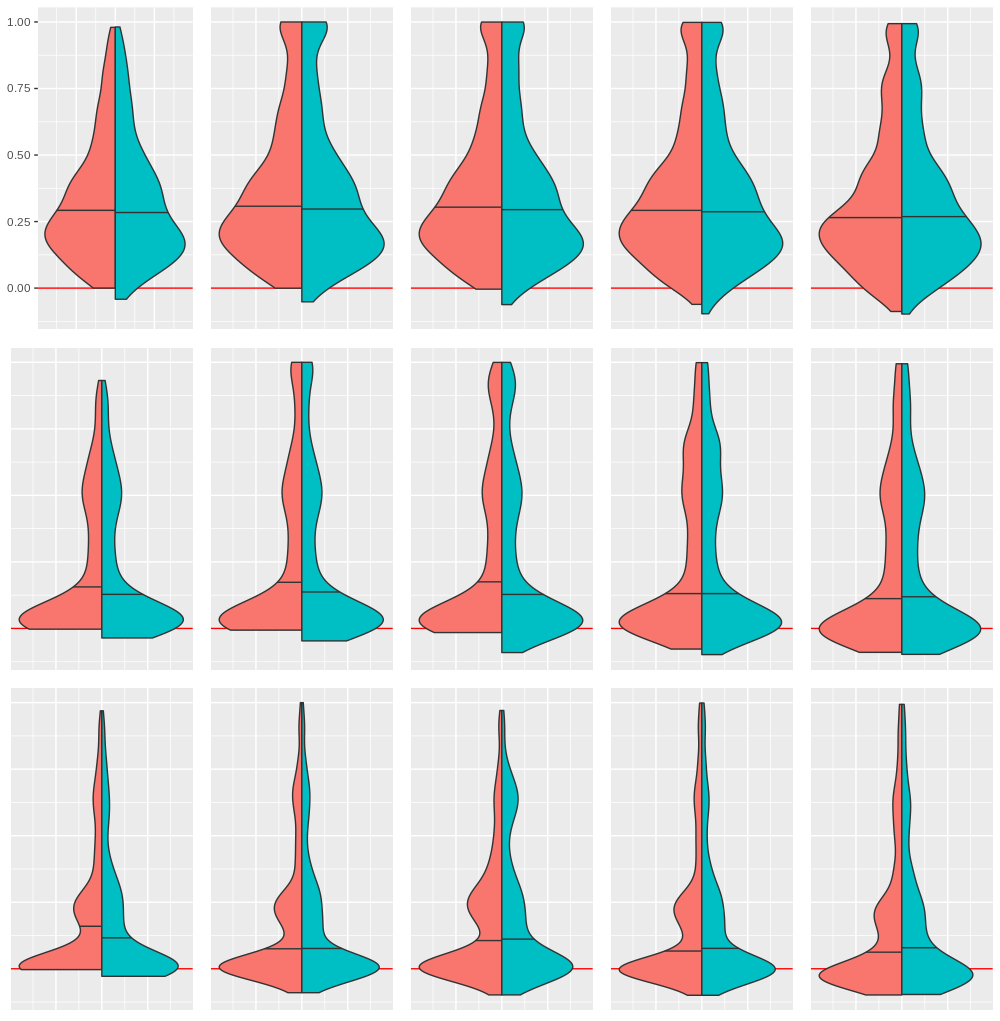}\quad\:
  \includegraphics[height=5.6cm]{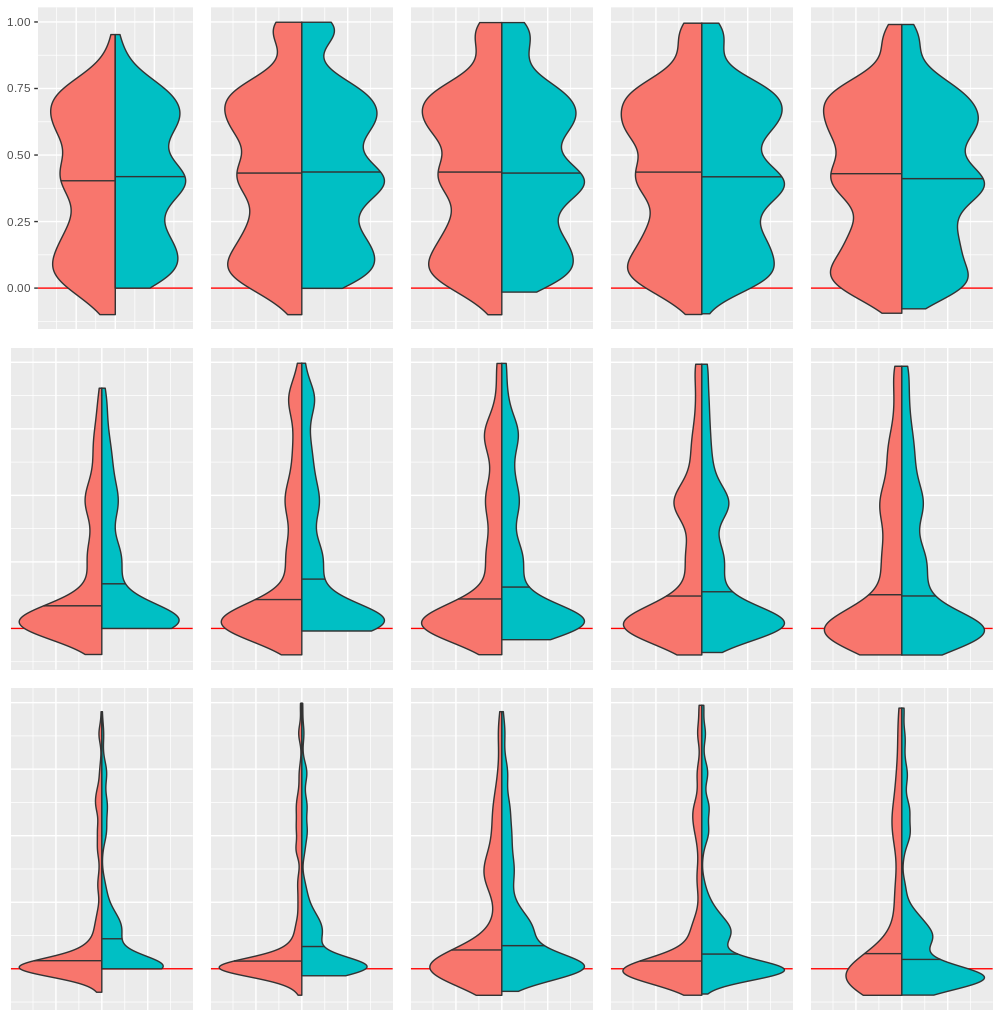}
  \caption{The reduction in Brier score (left figure) and log-loss (right figure) after BGA adjustment to reduce Brier score (left side of the violin) and after BGA to reduce log-loss (right side of the violin). The rows correspond to different amounts of shift (high at the top and low at the bottom). The columns correspond to amount of induced error in class distribution estimation, starting from left: 0.00, 0.01, 0.02, 0.04 and 0.08.}\label{fig:res2} 
\end{figure}

\subsection{Results}

On different datasets the effects of our shifting procedures vary and thus we have categorized the shifted datasets into 3 equal-sized groups by the amount of squared Euclidean distance between the original and new class distributions (high, medium and low shift). Note that these are correlated to the shift amount parameter $\varepsilon$, but not determined by it.
Figures~\ref{fig:res1} and \ref{fig:res2} both visualise the loss reduction after adjustment in proportion to the loss before adjustment. In these violin plots the part of distributions above $0$ stands for reduction of loss and below $0$ for increased loss after adjustment. For example, proportional reduction value $0.2$ means that 20\% of the loss was eliminated by adjustment. The left side of the left-most violins in Figure~\ref{fig:res1} show the case where BGA for Brier score is evaluated on Brier score (with high shift at the top row and low at the bottom). Due to guaranteed reduction in loss the left sides of violins are all above $0$. In contrast, the right side of the same violins shows the effect of PPA adjustment, and PPA can be seen to sometimes increase the loss, while also having lower average reduction of loss (the horizontal black line marking the mean is lower). When the injected estimation error in the class distribution increases (next 4 columns of violins), BGA adjustment can sometimes increase the loss as well, but is on average still reducing loss more than PPA in all of the violin plots. 
Similar patterns of results can be seen in the right subfigure of Figure~\ref{fig:res1}, where BGA for log-loss is compared with PPA, both evaluated on log-loss. The mean proportional reduction of loss by BGA is higher than by PPA in 13 out of 15 cases. The bumps in some violins are due to using 4 different types of shift. 



Figure~\ref{fig:res2} demonstrates the differences between BGA aiming to reduce Brier score (left side of each violin) and BGA to reduce log loss (right side of each violin), evaluated on Brier score (left subfigure) and log-loss (right subfigure). As seen from the right side of the leftmost violins, BGA aiming to reduce the wrong loss (log-loss) can actually increase loss (Brier score), even if the class distribution is known exactly. 
Therefore, as expected, it is important to adjust by minimising the same divergence that is going to be used to test the method.

\section{Conclusion}

In this paper we have constructed a family BGA of adjustment procedures aiming to reduce any proper loss of probabilistic classifiers after experiencing dataset shift, using knowledge about the class distribution. We have proved that the loss is guaranteed to reduce, if the class distribution is known exactly. 
According to our experiments, BGA adjustment to an approximated class distribution often still reduces loss more than prior probability adjustment.

\bibliographystyle{splncs04}
\bibliography{references}

\date{}
\title{Supplemental Material for\\ Shift Happens: Adjusting Classifiers}
\author{Theodore James Thibault Heiser \and
Mari-Liis Allikivi \and
Meelis Kull}
\authorrunning{T. Heiser et al.}
\institute{Institute of Computer Science, University of Tartu, Tartu, Estonia\\
\email{\{mari-liis.allikivi,meelis.kull\}@ut.ee}}
\maketitle

\section{Introduction}

This supplemental material of the paper \emph{Shift Happens: Adjusting Classifiers} first lists all the definitions of the paper, followed by theorems and proofs and also the lemmas needed to complete these proofs. The source code and the list of IDs of OpenML tasks and runs that enter our experiments have been provided as file \texttt{source\_code.zip} together with the current document.

\section{Definitions}

\begin{definition}[Proper Scoring Rule (or Proper Loss)]
In a $k$-class classification task a loss function $f:[0,1]^k\times\{0,1\}^k\to\mathbb{R}$ is called a \emph{proper scoring rule} (or \emph{proper loss}), if for any probability vectors $p,q\in[0,1]^k$ with $\sum_{i=1}p_i=1$ and $\sum_{i=1}q_i=1$ the following inequality holds:
$$\expect_{Y \sim q}[f(q,Y)] \leq \expect_{Y \sim q}[f(p,Y)]$$ 
where $Y$ is a one-hot encoded label randomly drawn from the categorical distribution over $k$ classes with class probabilities represented by vector $q$. 
The loss function $f$ is called \emph{strictly proper} if the inequality is strict for all $p\neq q$.
\end{definition}

\begin{definition}[Bregman Divergence]
Let $\phi : \Omega \to \mathbb{R}$ be a strictly convex function defined on a convex set $\Omega \subseteq \mathbb{R}^k$ such that $\phi$ is differentiable on the relative interior of $\Omega$, $ri(\Omega)$. The Bregman divergence $d_\phi : ri(\Omega) \times \Omega \to [0, \infty )$ is defined as 
$$d_\phi (p,q) = \phi (q) - \phi(p) - \langle q - p , \nabla \phi (p) \rangle$$ 
\end{definition}

\begin{definition}[Adjusted Predictions]
Let $p\in[0,1]^{n\times k}$ be the predictions of a probabilistic $k$-class classifier on $n$ instances and let $\pi\in[0,1]^k$ be the actual class distribution on these instances. We say that \emph{predictions $p$ are adjusted on this dataset}, if the average prediction is equal to the class proportion for every class $j$, that is
$\frac{1}{n}\sum_{i=1}^n p_{ij}=\pi_j$.
\end{definition}

\begin{definition}[Additive Adjustment]
\emph{Additive adjustment} is the function $\alpha_{+}:[0,1]^{n\times k}\times [0,1]^k\to [0,1]^{n\times k}$ which takes in the predictions of a probabilistic $k$-class classifier on $n$ instances and the actual class distribution $\pi$ on these instances, and outputs adjusted predictions $a=\alpha_{+}(p,\pi)$ defined as
$a_{i\cdot}=p_{i\cdot}+(\varepsilon_1,\dots,\varepsilon_k)$
where $a_{i\cdot}=(a_{i1},\dots,a_{ik})$, $p_{i\cdot}=(p_{i1},\dots,p_{ik})$, and $\varepsilon_j=\pi_j-\frac{1}{n}\sum_{i=1}^n p_{ij}$ for each class $j\in\{1,\dots,k\}$.
\end{definition}

\begin{definition}[Adjustment Procedure]
\emph{Adjustment procedure} is any function $\alpha:[0,1]^{n\times k}\times [0,1]^k\to [0,1]^{n\times k}$ which takes as arguments the predictions $p$ of a probabilistic $k$-class classifier on $n$ instances and the actual class distribution $\pi$ on these instances, such that for any $p$ and $\pi$ the output predictions $a=\alpha(p,\pi)$ are adjusted, that is $\frac{1}{n}\sum_{i=1}^n a_{ij}=\pi_j$ for each class $j\in\{1,\dots,k\}$.
\end{definition}

\begin{definition}[Bounded Adjustment Procedure]
An adjustment procedure $\alpha:[0,1]^{n\times k}\times [0,1]^k\to [0,1]^{n\times k}$ is \emph{bounded}, if for any $p$ and $\pi$ the output predictions $a=\alpha(p,\pi)$ are in the range $[0,1]$, that is $a_{ij}\in[0,1]$ for all $i,j$.
\end{definition}

\begin{definition}[Multiplicative Adjustment]
\emph{Multiplicative adjustment} is the function $\alpha_{*}:[0,1]^{n\times k}\times [0,1]^k\to [0,1]^{n\times k}$ which takes in the predictions of a probabilistic $k$-class classifier on $n$ instances and the actual class distribution $\pi$ on these instances, and outputs adjusted predictions $a=\alpha_{*}(p,\pi)$ defined as 
$a_{ij}=\frac{w_j p_{ij}}{z_i}$,
where $w_1,\dots,w_k\geq 0$ are real-valued weights chosen based on $p$ and $\pi$ such that the predictions $\alpha_{*}(p,\pi)$ would be adjusted, and $z_i$ are the renormalisation factors defined as $z_i=\sum_{j=1}^k w_j p_{ij}$. 
\end{definition}

\begin{definition}[Coherence of Adjustment Procedure and Bregman Divergence \cite{kull2015novel}]
Let $\alpha:[0,1]^{n\times k}\times [0,1]^k\to [0,1]^{n\times k}$ be an adjustment procedure and $d_\phi$ be a Bregman divergence. 
Then $\alpha$ is called to be coherent with $d_\phi$ if and only if for any predictions $p$ and class distribution $\pi$ the following holds for all $i=1,\dots,n$ and $j,j'=1,\dots,k$:
$$\left(d_\phi(a_{i\cdot},c_j)-d_\phi(p_{i\cdot},c_j)\right)
 -\left(d_\phi(a_{i\cdot},c_{j'})-d_\phi(p_{i\cdot},c_{j'})\right)
 =const_{j,j'}$$
where $const_{j,j'}$ is a quantity not depending on $i$, and where $a=\alpha(p,\pi)$ and $c_j$ is a one-hot encoded vector corresponding to class $j$ (with $1$ at position $j$ and $0$ everywhere else).
\end{definition}

\begin{definition}[Unbounded General Adjuster (UGA)]
Consider a $k$-class classification task with a test dataset of $n$ instances, and let $d_\phi$ be a Bregman divergence. Then the \emph{unbounded general adjuster corresponding to $d_\phi$} is the function $\alpha^\star : \mathbb{R}^{n \times k} \times \mathbb{R}^k \to \mathbb{R}^{n \times k}$ defined as follows:
\begin{equation*}
\alpha^\star(p, \pi) = \argmin_{a\in Q_\pi} \frac{1}{n}\sum_{i=1}^n d_\phi (p_{i\cdot}, a_{i\cdot})
\end{equation*}
\end{definition}

\begin{definition}[Bounded General Adjuster (BGA)]
Consider a $k$-class classification task with a test dataset of $n$ instances, and let $d_\phi$ be a Bregman divergence. Then the \emph{bounded general adjuster corresponding to $d_\phi$} is the function $\alpha^\square : [0,1]^{n \times k} \times [0,1]^k \to [0,1]^{n \times k}$ defined as follows:
\begin{equation*}
\alpha^\square(p, \pi) = \argmin_{a\in Q^\square_\pi} \frac{1}{n}\sum_{i=1}^n d_\phi (p_{i\cdot}, a_{i\cdot})
\end{equation*}
\end{definition}

\section{Theorems and Lemmas with Proofs}

\begin{theorem}[Decomposition of Bregman Divergences \cite{kull2015novel}]\label{thm:decomp}
Let $d_\phi$ be a Bregman divergence and let $\alpha:[0,1]^{n\times k}\times [0,1]^k\to [0,1]^{n\times k}$ be an adjustment procedure coherent with $d_\phi$.
Then for any predictions $p$, one-hot encoded true labels $y\in\{0,1\}^{n\times k}$ and class distribution $\pi$ (with $\pi_j=\frac{1}{n}\sum_{i=1}^n y_{ij}$) the following decomposition holds:
\begin{align} \label{eq:decomp}
\frac{1}{n}\sum_{i=1}^n d_\phi(p_{i\cdot},y_{i\cdot}) =
\frac{1}{n}\sum_{i=1}^n d_\phi(p_{i\cdot},a_{i\cdot}) +
\frac{1}{n}\sum_{i=1}^n d_\phi(a_{i\cdot},y_{i\cdot})
\end{align}
\end{theorem}
\begin{proof}
Proof given in cited article \cite{kull2015novel}.
\end{proof}

\begin{lemma} \label{lem:one}
Let $d_\phi : ri(\Omega) \times \Omega \to \mathbb{R}$ be a Bregman divergence. 
Then for any $p,q\in ri(\Omega)$ the following holds:
$$\nabla_q d_\phi (p,q) = \nabla\phi (q) - \nabla\phi (p),$$
where $\nabla_q$ notates the gradient with respect to vector $q$.
\end{lemma}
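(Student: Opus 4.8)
The plan is to compute the gradient directly from the definition of the Bregman divergence, differentiating term by term with respect to $q$ while holding $p$ fixed. Recall that $d_\phi(p,q) = \phi(q) - \phi(p) - \langle q - p, \nabla\phi(p)\rangle$, and that both $p$ and $q$ lie in $ri(\Omega)$, where $\phi$ is differentiable, so every gradient appearing below is well-defined.

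First I would observe that the middle term $-\phi(p)$ does not depend on $q$ at all, so its gradient with respect to $q$ is zero. The first term $\phi(q)$ contributes $\nabla\phi(q)$ immediately. The only term requiring any care is the inner product, which expands as $\langle q - p, \nabla\phi(p)\rangle = \sum_{\ell=1}^k (q_\ell - p_\ell)\,(\nabla\phi(p))_\ell$. Because $p$ is fixed, the vector $\nabla\phi(p)$ is constant with respect to $q$, and this expression is affine in $q$; differentiating the factor $(q_\ell - p_\ell)$ with respect to $q_m$ yields $\delta_{\ell m}$, so its gradient with respect to $q$ equals exactly $\nabla\phi(p)$. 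Combining the three contributions by linearity of the gradient gives $\nabla_q d_\phi(p,q) = \nabla\phi(q) - 0 - \nabla\phi(p) = \nabla\phi(q) - \nabla\phi(p)$, which is the claimed identity.

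There is essentially no hard part to this argument: the statement is a one-line consequence of linearity of the gradient together with the fact that the only $q$-dependence enters through $\phi(q)$ and through a term that is affine in $q$. The single point worth stating explicitly is that $p$ is treated as a constant throughout, so that $\phi(p)$ contributes nothing and the inner-product term contributes precisely $-\nabla\phi(p)$. This lemma then serves as a convenient building block for the subsequent theorems, since it identifies the gradient of the divergence in its second argument with the difference of the convex-function gradients $\nabla\phi(q) - \nabla\phi(p)$, a form that will be used when characterising the optimum of the UGA projection onto $Q_\pi$.
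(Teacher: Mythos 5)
Your proof is correct and follows essentially the same route as the paper's: differentiate the defining expression term by term, noting that $\phi(p)$ is constant in $q$ and that the inner product $\langle q-p,\nabla\phi(p)\rangle$ is affine in $q$ with gradient $\nabla\phi(p)$. The paper does the same coordinate-wise expansion of the inner-product term, so there is no substantive difference.
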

\begin{proof} By the definition of Bregman divergence,
$$d_\phi (p, q) = \phi(q) - \phi(p) - \langle q - p, \nabla\phi (p) \rangle .$$
The required result follows by taking $\nabla_q$ of each side and simplifying: 
\begin{equation*}
\begin{split}
\nabla_q d_\phi (p,q) &= \nabla_{q} (\phi(q) - \phi(p) - \langle q - p, \nabla\phi (p) \rangle ) \\
    & = \nabla_{q} \phi(q) - \nabla_{q} \phi(p) - \nabla_{q} \langle q - p, \nabla\phi (p) \rangle\\
	& = \nabla \phi(q) - \nabla_{q} \langle q - p, \nabla\phi (p) \rangle\\
	& = \nabla \phi(q) - \nabla_{q} \langle q, \nabla\phi (p) \rangle\\
	& = \nabla \phi(q) - \nabla_{q} ( q_1\frac{\partial}{\partial p_1}\phi (p) + \ldots + q_k\frac{\partial}{\partial p_k}\phi (p))\\
	& = \nabla \phi(q) - (\frac{\partial}{\partial p_1}\phi (p), \ldots,\frac{\partial}{\partial p_k}\phi (p))\\
	& = \nabla \phi(q) - \nabla \phi(p)
\end{split}
\end{equation*}
\end{proof}

\begin{lemma} \label{lem:two}
Let $d_\phi : ri(\Omega) \times \Omega \to \mathbb{R}$ be a Bregman divergence. 
Then for any $p, q \in ri(\Omega)$ and $z \in \Omega$ the following holds:
$$d_\phi (p, z) - d_\phi (q, z) = \langle z-q, \nabla_q d_\phi (p,q) \rangle + d_\phi (p,q).$$
\end{lemma}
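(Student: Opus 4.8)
The plan is to prove this identity by direct algebraic substitution, expanding both sides using only the definition of the Bregman divergence together with the gradient formula from Lemma~\ref{lem:one}. This is essentially the three-point (generalized law of cosines) identity for Bregman divergences, so no convexity or analytic machinery beyond the definitions should be needed; it is a bookkeeping computation.

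First I would write out the three divergences appearing in the statement explicitly from the definition $d_\phi(x,y)=\phi(y)-\phi(x)-\langle y-x,\nabla\phi(x)\rangle$:
\begin{align*}
d_\phi(p,z) &= \phi(z)-\phi(p)-\langle z-p,\nabla\phi(p)\rangle,\\
d_\phi(q,z) &= \phi(z)-\phi(q)-\langle z-q,\nabla\phi(q)\rangle,\\
d_\phi(p,q) &= \phi(q)-\phi(p)-\langle q-p,\nabla\phi(p)\rangle.
\end{align*}
Subtracting the first two gives the left-hand side $d_\phi(p,z)-d_\phi(q,z)$, in which the $\phi(z)$ terms cancel, leaving $\phi(q)-\phi(p)-\langle z-p,\nabla\phi(p)\rangle+\langle z-q,\nabla\phi(q)\rangle$.

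Next I would expand the right-hand side. Using Lemma~\ref{lem:one} I substitute $\nabla_q d_\phi(p,q)=\nabla\phi(q)-\nabla\phi(p)$, so that $\langle z-q,\nabla_q d_\phi(p,q)\rangle=\langle z-q,\nabla\phi(q)\rangle-\langle z-q,\nabla\phi(p)\rangle$, and then add the expanded $d_\phi(p,q)$. The key simplification is that the two terms carrying $\nabla\phi(p)$, namely $-\langle z-q,\nabla\phi(p)\rangle$ and $-\langle q-p,\nabla\phi(p)\rangle$, collapse by bilinearity of the inner product into $-\langle(z-q)+(q-p),\nabla\phi(p)\rangle=-\langle z-p,\nabla\phi(p)\rangle$. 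After this telescoping the right-hand side reduces to exactly $\phi(q)-\phi(p)-\langle z-p,\nabla\phi(p)\rangle+\langle z-q,\nabla\phi(q)\rangle$, matching the left-hand side computed above.

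I expect no genuine obstacle here, since the whole argument is elementary once Lemma~\ref{lem:one} is in hand. The only points requiring care are the paper's argument-order convention for $d_\phi$ (the gradient is taken at the \emph{first} argument, which governs every sign), and the tracking of signs when collecting the $\nabla\phi(p)$ terms; the single substantive step is recognizing the vector identity $(z-q)+(q-p)=z-p$ that makes the two inner products merge.
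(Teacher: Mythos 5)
Your proof is correct and takes essentially the same approach as the paper's: both are direct expansions from the definition of $d_\phi$ combined with the substitution $\nabla_q d_\phi(p,q)=\nabla\phi(q)-\nabla\phi(p)$ from Lemma~\ref{lem:one}, with the same telescoping of the $\nabla\phi(p)$ inner products. The only cosmetic difference is that you expand both sides and meet in the middle, while the paper transforms the left-hand side into the right-hand side in one chain.
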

\begin{proof} Simplifying from the definition of Bregman divergence gives:
\begin{equation*}
\begin{split}
d_\phi (p,z) - d_\phi (q, z) & = (\phi(z) - \phi(p) - \langle z-p, \nabla\phi(p) \rangle) - (\phi(z) - \phi(q) - \langle z-q, \nabla\phi(q) \rangle) \\
    & = \phi(q) -  \phi(p) + \langle z-q, \nabla\phi(q) \rangle  - \langle z-p, \nabla\phi(p) \rangle
\end{split}
\end{equation*}
Using Lemma~\ref{lem:one} to rewrite the third term yields:
\begin{equation*}
\begin{split}
    & = \phi(q) -  \phi(p) + \langle z-q,  \nabla_q d_\phi(p,q) + \nabla\phi(p) \rangle  - \langle z-p, \nabla\phi(p)\rangle \\
    & = \phi(q) -  \phi(p) + \langle z-q,  \nabla_q d_\phi(p,q) \rangle+ \langle z-q, \nabla\phi(p) \rangle  - \langle z-p, \nabla\phi(p)\rangle \\
    & = \phi(q) -  \phi(p) + \langle z-q,  \nabla_q d_\phi(p,q) \rangle - \langle q, \nabla\phi(p) \rangle  + \langle p, \nabla\phi(p)\rangle \\
    & = \phi(q) -  \phi(p) + \langle z-q,  \nabla_q d_\phi(p,q) \rangle - \langle q - p, \nabla\phi(p) \rangle \\
    & = \langle z-q,  \nabla_q d_\phi(p,q) \rangle + \phi(q) -  \phi(p) - \langle q - p, \nabla\phi(p) \rangle \\
    & = \langle z-q, \nabla_q d_\phi(p,q) \rangle  + d_\phi (p,q)
\end{split}
\end{equation*}
\end{proof}

\begin{lemma} \label{lem:three}
Let $d_\phi$ be a Bregman divergence, let $p$ be a set of predictions, and $\pi$ be a class distribution over $k$ classes. Denoting $a^\star= \alpha^\star(p, \pi)$, the following holds for any $q\in Q_\pi$:
$$\frac{1}{n}\sum_{i=1}^n 
(d_\phi(p_{i\cdot},q_{i\cdot}) 
- d_\phi(a^\star_{i\cdot}, q_{i\cdot})) 
= \frac{1}{n}\sum_{i=1}^n (d_\phi(p_{i\cdot}, a_{i\cdot}^\star))$$
\end{lemma}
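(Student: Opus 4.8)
The plan is to recognise this identity as a generalised Pythagorean relation for the Bregman projection $a^\star$ of $p$ onto the affine set $Q_\pi$, and to derive it directly from Lemma~\ref{lem:two} combined with the first-order optimality condition satisfied by $a^\star$. First I would apply Lemma~\ref{lem:two} row by row, instantiating $p := p_{i\cdot}$, $q := a^\star_{i\cdot}$, and $z := q_{i\cdot}$ for each instance $i$. This yields, for every $i$,
$$d_\phi(p_{i\cdot}, q_{i\cdot}) - d_\phi(a^\star_{i\cdot}, q_{i\cdot}) = \langle q_{i\cdot} - a^\star_{i\cdot},\, \nabla_{a^\star_{i\cdot}} d_\phi(p_{i\cdot}, a^\star_{i\cdot})\rangle + d_\phi(p_{i\cdot}, a^\star_{i\cdot}).$$
Summing over $i$ and dividing by $n$, the last term already produces the right-hand side of the claimed identity, so the whole lemma reduces to showing that the aggregated inner-product term vanishes, i.e. $\sum_{i=1}^n \langle q_{i\cdot} - a^\star_{i\cdot},\, \nabla_{a^\star_{i\cdot}} d_\phi(p_{i\cdot}, a^\star_{i\cdot})\rangle = 0$.

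To handle this cross term I would invoke Lemma~\ref{lem:one} to rewrite the gradient as $\nabla_{a^\star_{i\cdot}} d_\phi(p_{i\cdot}, a^\star_{i\cdot}) = \nabla\phi(a^\star_{i\cdot}) - \nabla\phi(p_{i\cdot})$, which is (up to the $\tfrac{1}{n}$ factor) precisely the partial gradient with respect to $a_{i\cdot}$ of the UGA objective $\frac{1}{n}\sum_i d_\phi(p_{i\cdot}, a_{i\cdot})$ evaluated at $a = a^\star$. The key structural observation is that $Q_\pi$ is an affine subspace, being cut out by the linear equality constraints $\frac{1}{n}\sum_i a_{ij}=\pi_j$ and $\sum_j a_{ij}=1$ only, with no inequalities. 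Hence for any $q \in Q_\pi$ the entire line $a^\star + t(q - a^\star)$ remains in $Q_\pi$ for all $t \in \mathbb{R}$, and the difference $q - a^\star$ lies in the direction subspace of $Q_\pi$.

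Since $a^\star$ minimises the convex, differentiable objective over $Q_\pi$, restricting the objective to this line gives a one-dimensional convex function of $t$ attaining its minimum at $t = 0$; its derivative at $t=0$, which by the chain rule equals $\frac{1}{n}\sum_i \langle q_{i\cdot} - a^\star_{i\cdot},\, \nabla\phi(a^\star_{i\cdot}) - \nabla\phi(p_{i\cdot})\rangle$, must therefore be zero. Equivalently, this is the standard first-order optimality (KKT) condition for minimisation subject to linear equalities: the objective gradient at $a^\star$ is a linear combination of the equality-constraint gradients, hence orthogonal to every feasible direction, in particular to $q - a^\star$. Either route shows the cross term is zero and completes the proof.

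The main obstacle I anticipate is being careful that the line through $a^\star$ and $q$ genuinely stays inside $Q_\pi$, so that optimality forces a \emph{vanishing} derivative rather than merely a nonnegative one; this hinges entirely on $Q_\pi$ being affine rather than merely convex, which is exactly what distinguishes this UGA identity from the analogous statement for BGA over the constrained set $Q^\square_\pi$. Once this affineness is used, the remaining steps are routine applications of Lemmas~\ref{lem:one} and~\ref{lem:two}.
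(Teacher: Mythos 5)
Your proof is correct, and it follows the paper's skeleton up to the key step: both you and the paper apply Lemma~\ref{lem:two} row-wise with $p := p_{i\cdot}$, $q := a^\star_{i\cdot}$, $z := q_{i\cdot}$, sum over $i$, and thereby reduce the lemma to showing $\sum_{i=1}^n \langle q_{i\cdot} - a^\star_{i\cdot},\, \nabla_{a^\star_{i\cdot}} d_\phi(p_{i\cdot}, a^\star_{i\cdot})\rangle = 0$. Where you genuinely differ is the mechanism for this vanishing. The paper forms the Lagrangian with multipliers $\theta_i$ (row-sum constraints) and $\lambda_j$ (class-average constraints), derives the explicit stationarity form $\nabla_{a^\star_{i\cdot}} d_\phi(p_{i\cdot}, a^\star_{i\cdot}) = (-\theta_i - \lambda_1, \dots, -\theta_i - \lambda_k)$, and then expands the inner product and kills it term by term using the fact that every row and every column of $q - a^\star$ sums to zero. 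You instead exploit the affineness of $Q_\pi$ geometrically: the line $a^\star + t(q - a^\star)$ stays in $Q_\pi$ for all $t$, the restricted objective is convex with a minimum at the interior point $t = 0$, hence its derivative there --- which by the chain rule and Lemma~\ref{lem:one} is exactly the cross term up to the factor $1/n$ --- must be zero. Your route is shorter, needs no explicit dual variables, and cleanly isolates why the identical reasoning yields only an inequality for BGA in Lemma~\ref{lem:four} (over $Q^\square_\pi$ the feasible directions are one-sided, giving $\geq 0$ rather than $=0$); the paper's more computational route has the advantage that the explicit multiplier form it derives is reused verbatim in the coherence theorem and extends directly to the KKT argument of Lemma~\ref{lem:four}. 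The one caveat in your version is that the two-sided derivative argument needs $t \mapsto \sum_i d_\phi(p_{i\cdot}, a^\star_{i\cdot} + t(q_{i\cdot} - a^\star_{i\cdot}))$ to be defined and differentiable on an open neighbourhood of $t=0$, i.e.\ $a^\star$ must lie in the relative interior of the domain of $\phi$; this is the same implicit regularity assumption the paper makes when it sets $\nabla F(a,\theta,\lambda) = \mathbf{0}$, so no rigor is lost relative to the paper, but it is worth stating explicitly.
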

\begin{proof} 
In the following we will use a simplified notation and write 
$p_i$, $q_i$, $a^\star_i$ instead of 
$p_{i\cdot}$, $q_{i\cdot}$, $a^\star_{i\cdot}$.
Using Lemma~\ref{lem:two} we can write
$$\frac{1}{n}\sum_{i=1}^n (d_\phi(p_i,q_i) - d_\phi(a^\star_i, q_i)) = \frac{1}{n}\sum_{i=1}^n (\langle q_i - a^\star_i, \nabla_{a^\star_i} d_\phi (p_i, a^\star_i) \rangle + d_\phi(p_i, a_i^\star))$$
If we can prove that
$$\sum_{i=1}^n \langle q_i - a^\star_i, \nabla_{a^\star_i} d_\phi (p_i, a^\star_i) \rangle = 0$$
then the proof will be complete. So we begin by using the method of Lagrange multipliers to define what each $\nabla_{a^\star_i} d_\phi (p_i, a^\star_i)$ is for each $i \in \{1,\dots,n\}$. We rewrite the original argument minimization problem. Keep note our new function will have $n \times k$ variables from $a$, and $n$ variables from our first constraint, and $k$ variables from our second constraint.
$$F(a, \theta, \lambda) = \sum_{i=1}^n d_\phi(p_i, a_i) + \sum_{i=1}^n \theta_i (1 - \sum_{j=1}^k a_{i,j}) + \sum_{j=1}^k \lambda_j (\pi_j - \frac{1}{n}\sum_{i=1}^n a_{i,j})$$
Minimum is when
$$\nabla F(a, \theta, \lambda) = \textbf0.$$
Let's expand the gradient.
$$\nabla F(a, \theta, \lambda) = (\nabla_a F(a, \theta, \lambda), \nabla_\theta F(a, \theta, \lambda), \nabla_\lambda F(a, \theta, \lambda))$$
Let's expand the first term. For simplicity's sake we will represent $\nabla_a$ as a matrix, but it is a vector in actuality.
\[
\nabla_a F(a, \theta, \lambda)
=
\begin{bmatrix}
    \frac{\partial}{\partial a_{1,1}} F(a, \theta, \lambda)        & \dots & \frac{\partial}{\partial a_{1,k}} F(a, \theta, \lambda) \\
    \hdotsfor{3} \\
    \frac{\partial}{\partial a_{n,1}} F(a, \theta, \lambda)       & \dots & \frac{\partial}{\partial a_{n,k}} F(a, \theta, \lambda)
\end{bmatrix}
\]
\[
=
\begin{bmatrix}
    \theta_1 + \lambda_1 + \frac{\partial}{\partial a_{1,1}} d_\phi (p_1, a_1)       & \dots & \theta_1 + \lambda_k + \frac{\partial}{\partial a_{1,k}} d_\phi (p_1, a_1) \\
    \hdotsfor{3} \\
    \theta_n + \lambda_1 + \frac{\partial}{\partial a_{n,1}} d_\phi (p_n, a_n)      & \dots & \theta_n + \lambda_k + \frac{\partial}{\partial a_{n,k}} d_\phi (p_n, a_n)
\end{bmatrix}
\]
We can now see that for each entry $(i,j)$ in $\nabla_a F(a, \theta, \lambda)$ to equal $0$, then each 
$$\frac{\partial}{\partial a_{i,j}} d_\phi (p_i, a_i) = - \theta_i - \lambda_j$$
Since $a^\star$ is at the minimum, this implies
$$\nabla_{a^\star_i} d_\phi (p_i, a^\star_i) = ( - \theta_i - \lambda_1,..., - \theta_i - \lambda_k)$$
We can now write out
\begin{equation*}
\begin{split}
\sum_{i=1}^n \langle q_i - a^\star_i, \nabla_{a^\star_i} d_\phi (p_i, a^\star_i) \rangle &= \sum_{i=1}^n \sum_{j=1}^k (q_{i,j} - a^\star_{i,j})(- \theta_i - \lambda_j) \\
	&= \sum_{i=1}^n \sum_{j=1}^k (q_{i,j} - a^\star_{i,j})(- \theta_i) + \sum_{i=1}^n \sum_{j=1}^k (q_{i,j} - a^\star_{i,j})(- \lambda_j)\\
	&= \sum_{i=1}^n (- \theta_i)\sum_{j=1}^k (q_{i,j} - a^\star_{i,j}) + \sum_{j=1}^k (- \lambda_j)\sum_{i=1}^n (q_{i,j} - a^\star_{i,j})\\
\end{split}
\end{equation*}
We know from the constraints that each row and column of $q-a^\star$ sums to 0.
$$\sum_{j=1}^k (q_{i,j} - a^\star_{i,j})=0 \text{~and~} \sum_{i=1}^n (q_{i,j} - a^\star_{i,j})=0$$
So it's clear that
$$\sum_{i=1}^n (- \theta_i)\sum_{j=1}^k (q_{i,j} - a^\star_{i,j}) + \sum_{j=1}^k (- \lambda_j)\sum_{i=1}^n (q_{i,j} - a^\star_{i,j}) = 0$$
\end{proof}

\begin{theorem}
Let $\alpha^\star$ be the unbounded general adjuster corresponding to the Bregman divergence $d_\phi$.
Then $\alpha^\star$ is coherent with $d_\phi$.
\end{theorem}
\begin{proof} Let $a^\star= \alpha^\star(p, \pi)$. For $\alpha^\star$ to be coherent. the following equation must be satisfied following the definition of coherence (we use notation $e_i$ instead of $c_i$ to emphasise that these are unit vectors, we use letters $i$ and $j$ instead of $j$ and $j'$, and letter $x$ to stand for a row in matrices $a^\star$ and $p$):
$$d_\phi(a^\star_x, e_i) - d_\phi(a^\star_x,e_j) - d_\phi(p_x, e_i) + d_\phi(p_x,e_j) \stackrel{?}{=} const_{i,j}$$
We can just use the definition of divergence and properties of vectors to get the equation into a new form.
\begin{equation*}
\begin{split}
const_{i,j} &\stackrel{?}{=} d_\phi(a^\star_x, e_i) - d_\phi(a^\star_x,e_j) - d_\phi(p_x, e_i) + d_\phi(p_x,e_j) \\
	&=  \phi(e_i) - \phi(a^\star_x) - \langle e_i - a^\star_x, \nabla \phi(a^\star_x)\rangle \\
	&\text{~~~~}- \phi(e_j) + \phi(a^\star_x) + \langle e_j - a^\star_x, \nabla \phi(a^\star_x)\rangle \\ 
	&\text{~~~~}-\phi(e_i) + \phi(p_x) + \langle e_i - p_x, \nabla \phi(p_x)\rangle \\
	&\text{~~~~}+  \phi(e_j) - \phi(p_x) - \langle e_j - p_x, \nabla \phi(p_x)\rangle\\
	&=  \langle e_j - a^\star_x, \nabla \phi(a^\star_x)\rangle \\
	&\text{~~~~}-  \langle e_i - a^\star_x, \nabla \phi(a^\star_x)\rangle \\
	&\text{~~~~}-  \langle e_j - p_x, \nabla \phi(p_x)\rangle \\
	&\text{~~~~}+ \langle e_i - p_x,  \nabla \phi(p_x)\rangle\\ 
&=  \langle e_j - e_i, \nabla \phi(a^\star_x)\rangle - \langle e_j - e_i,  \nabla \phi(p_x)\rangle\\ 
&=  \langle e_j - e_i, \nabla \phi(a^\star_x) -  \nabla \phi(p_x)\rangle
\end{split}
\end{equation*}
From our earlier theorem, we know.
$$\langle e_j - e_i, \nabla \phi(a^\star_x) -  \nabla \phi(p_x)\rangle =  \langle e_j - e_i, \nabla_{a^\star_x}d_\phi(p_x,a^\star_x)\rangle$$
We know from the proof in Lemma~\ref{lem:three} that $\nabla_{a^\star_x}d_\phi(p_x,a^\star_x)$ is defined by the sum of two variables that depend on $i$ and $j$, $\theta$ and $\lambda$.
That means $const_{i,j}  = \langle e_j - e_i, \nabla^\star_{a^\star_x}d_\phi(p_x,a^\star_x)\rangle$ only depends on $i$ and $j$ and not on $x$, matching the definition of coherence.
\end{proof}

\begin{theorem}
Let $d_\phi$ be a Bregman divergence, let $p$ be a set of predictions, and $\pi$ be a class distribution over $k$ classes. Suppose $a\in Q_\pi$ is such that for any $y\in Q_\pi$ the decomposition of Eq.(\ref{eq:decomp}) holds.
Then $a=\alpha^\star(p,\pi)$.
\end{theorem}
\begin{proof}
We prove by contradiction and assume that $a \neq \alpha^\star(p, \pi)$.
Take the case where $q = \alpha^\star(p, \pi)$.
We can rewrite the theorem's equality to
$$\sum_{i=1}^n d_\phi(p_i,q_i) = \sum_{i=1}^n (d_\phi(p_i, a_i) + d_\phi(a_i, q_i)).$$
By the definition of $\alpha^\star$ we have
$\sum_{i=1}^n d_\phi(p_i,q_i) < \sum_{i=1}^n d_\phi(p_i, a_i))$
and by the definition of Bregman divergence 
$\sum_{i=1}^n d_\phi(a_i, q_i) > 0$.
Therefore, 
$$\sum_{i=1}^n d_\phi(p_i,q_i) < \sum_{i=1}^n (d_\phi(p_i, a_i) + d_\phi(a_i, q_i))$$.
We have a contradiction, so the assumption was false.
\end{proof}

\begin{lemma} \label{lem:four}
Let $d_\phi$ be a Bregman divergence, let $p$ be a set of predictions, and $\pi$ be a class distribution over $k$ classes. Denoting $a^\square = \alpha^\square(p, \pi)$, the following holds for any $q\in Q^\square_\pi$:
$$\sum_{i=1}^n \langle q_i-a^\square_i, \nabla_{a^\square_i} d_\phi (p_i,a^\square_i) \rangle \geq 0$$ 
\end{lemma}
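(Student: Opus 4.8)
The plan is to read the sum in the statement as $n$ times the directional derivative of the BGA objective $f(a)=\frac1n\sum_{i=1}^n d_\phi(p_i,a_i)$, taken at its minimiser $a^\square$ in the direction of an arbitrary feasible point $q\in Q^\square_\pi$. Since $\nabla_{a_i}f=\frac1n\nabla_{a_i}d_\phi(p_i,a_i)$, we have $\langle q-a^\square,\nabla f(a^\square)\rangle=\frac1n\sum_{i=1}^n\langle q_i-a^\square_i,\nabla_{a^\square_i}d_\phi(p_i,a^\square_i)\rangle$, so it suffices to prove that this directional derivative is non-negative. This is exactly the first-order optimality condition for minimising a convex differentiable function over a convex set, and $a^\square$ is by definition such a minimiser over $Q^\square_\pi$.

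The first route I would take is the elementary convexity argument, which avoids multipliers entirely. Because $Q^\square_\pi$ is convex (Section~\ref{sec:implementation}), the segment $a^\square+t(q-a^\square)$ lies in $Q^\square_\pi$ for every $t\in[0,1]$. Setting $g(t)=f(a^\square+t(q-a^\square))$, the function $g$ is minimised on $[0,1]$ at $t=0$ because $a^\square$ minimises $f$ over $Q^\square_\pi$; hence $g'(0)\geq 0$. The chain rule gives $g'(0)=\langle q-a^\square,\nabla f(a^\square)\rangle$, which equals $\frac1n$ times the claimed sum, and multiplying by $n>0$ closes the argument. This uses only the convexity of the feasible set and of $d_\phi$ in its second argument, both already established.

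A second, more explicit route mirrors the Lagrange computation of Lemma~\ref{lem:three} and makes transparent why the equality there becomes an inequality here. I would write the KKT stationarity conditions for the BGA problem, keeping the multipliers $\theta_i$ and $\lambda_j$ of the two families of equality constraints and adding multipliers $\mu_{ij}\geq 0$ for the constraints $a_{ij}\geq 0$, together with complementary slackness $\mu_{ij}a^\square_{ij}=0$. Stationarity then gives the $j$-th component of $\nabla_{a^\square_i}d_\phi(p_i,a^\square_i)$ as $-\theta_i-\lambda_j+\mu_{ij}$. Substituting into the sum, the $\theta_i$ and $\lambda_j$ contributions vanish term by term exactly as in Lemma~\ref{lem:three}, since $q,a^\square\in Q_\pi$ force every row and every column of $q-a^\square$ to sum to zero. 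What survives is $\sum_{i=1}^n\sum_{j=1}^k(q_{ij}-a^\square_{ij})\mu_{ij}$; complementary slackness annihilates the $a^\square_{ij}\mu_{ij}$ terms, leaving $\sum_{i,j}q_{ij}\mu_{ij}\geq 0$ because $q_{ij}\geq 0$ (here $q\in Q^\square_\pi$ is used) and $\mu_{ij}\geq 0$.

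The main obstacle, relative to the unbounded case, is precisely the inequality constraints. In Lemma~\ref{lem:three} the inner product was forced to be exactly zero by the equality constraints alone; here the non-negativity constraints leave the residual $\sum_{i,j}q_{ij}\mu_{ij}$, and the crux is to show this has a definite sign rather than cancelling. Getting the sign bookkeeping of $\mu_{ij}$ right, and invoking complementary slackness together with $q\in Q^\square_\pi$ (as opposed to merely $q\in Q_\pi$), is the only delicate point; if justifying the explicit KKT form required a constraint-qualification check, I would instead write up the elementary first route, which sidesteps the multipliers completely.
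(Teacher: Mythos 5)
Your proposal is correct, and it contains two routes of which the second is essentially the paper's own proof: the paper forms the Lagrangian with multipliers $\theta_i,\lambda_j$ for the two families of equality constraints and $\psi_{i,j}\geq 0$ for the constraints $a_{i,j}\geq 0$, reads off stationarity as $\frac{\partial}{\partial a_{i,j}} d_\phi (p_i, a_i) = \psi_{i,j} - \theta_i - \lambda_j$, cancels the $\theta$ and $\lambda$ contributions exactly as in Lemma~\ref{lem:three} (rows and columns of $q-a^\square$ sum to zero), and is left with $\sum_{i,j}\psi_{i,j}(q_{i,j}-a^\square_{i,j})$, which is non-negative by complementary slackness ($\psi_{i,j}a^\square_{i,j}=0$) and $q_{i,j}\geq 0$ --- precisely your bookkeeping with $\mu_{ij}$ renamed to $\psi_{i,j}$. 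Your first route is genuinely different and more elementary: since $Q^\square_\pi$ is convex and $a^\square$ minimizes the convex, differentiable objective over it, the one-dimensional restriction $g(t)=f\bigl(a^\square+t(q-a^\square)\bigr)$ is minimized at $t=0$ on $[0,1]$, so $g'(0)=\langle q-a^\square,\nabla f(a^\square)\rangle\geq 0$, which is $\frac1n$ times the claimed sum. This variational-inequality argument needs only the convexity facts the paper already establishes in Section~\ref{sec:implementation}, avoids any constraint-qualification discussion (though that concern is moot here anyway, since all constraints are affine and KKT necessity holds without Slater-type conditions), and would generalize verbatim to any convex feasible set. What the paper's multiplier computation buys in exchange is structural information: it exhibits the gradient $\nabla_{a^\square_i} d_\phi(p_i,a^\square_i)$ componentwise and isolates the exact residual $\sum_{i,j}\psi_{i,j}q_{i,j}$ responsible for the inequality, in a form parallel to Lemma~\ref{lem:three}, whose Lagrange structure is reused in the coherence theorem for UGA. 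Both your routes and the paper's proof share one implicit assumption already present in the lemma's statement, namely that $d_\phi(p_i,\cdot)$ is differentiable at $a^\square_i$ even when $a^\square_i$ lies on the boundary of the simplex; this is a caveat of the statement itself, not a gap in your argument.
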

\begin{proof}
This is pretty much like the proof of Lemma~\ref{lem:three} except we use the Karush-Kuhn-Tucker method to add our extra set of inequality constraints.
\begin{align*}
F(a, \theta, \lambda, \psi) 
&= \sum_{i=1}^n d_\phi(p_i, a_i) + \sum_{i=1}^n \theta_i (1 - \sum_{j=1}^k a_{i,j}) \\
&+ \sum_{j=1}^k \lambda_j (\pi - \frac{1}{n}\sum_{i=1}^n a_{i,j}) + \sum_{i=1}^n \sum_{j=1}^k \psi_{i,j} (-a_{i,j})
\end{align*}
Minimum is when
$$\nabla F(a, \theta, \lambda, \psi) = \textbf0.$$
Let's expand the gradient.
$$\nabla F(a, \theta, \lambda, \psi) = (\nabla_a F(a, \theta, \lambda), \nabla_\theta F(a, \theta, \lambda), \nabla_\lambda F(a, \theta, \lambda), \nabla_\psi F(a, \theta, \lambda))$$
Let's expand the first term. For simplicity's sake we will represent $\nabla_a$ as a matrix, but it is a vector in actuality.
\[
\nabla_a F(a, \theta, \lambda)
=
\begin{bmatrix}
    \frac{\partial}{\partial a_{1,1}} F(a, \theta, \lambda)        & \dots & \frac{\partial}{\partial a_{1,k}} F(a, \theta, \lambda) \\
    \hdotsfor{3} \\
    \frac{\partial}{\partial a_{n,1}} F(a, \theta, \lambda)       & \dots & \frac{\partial}{\partial a_{n,k}} F(a, \theta, \lambda)
\end{bmatrix}
\]
\[
=
\begin{bmatrix}
    \theta_1 + \lambda_1 - \psi_{1,1} + \frac{\partial}{\partial a_{1,1}} d_\phi (p_1, a_1)       & \dots & \theta_1 + \lambda_k - \psi_{1,k} + \frac{\partial}{\partial a_{1,k}} d_\phi (p_1, a_1) \\
    \hdotsfor{3} \\
    \theta_n + \lambda_1 - \psi_{n,1} + \frac{\partial}{\partial a_{n,1}} d_\phi (p_n, a_n)      & \dots & \theta_n + \lambda_k - \psi_{n,k} + \frac{\partial}{\partial a_{n,k}} d_\phi (p_n, a_n)
\end{bmatrix}
\]
We can now see that for each entry $(i,j)$ in $\nabla_a F(a, \theta, \lambda, \psi)$ to equal $0$, then each 
$$\frac{\partial}{\partial a_{i,j}} d_\phi (p_i, a_i) = \psi_{i,j} - \theta_i - \lambda_j$$
Since $a^\square$ is at the minimum, this implies
$$\nabla_{a^\square_i} d_\phi (p_i, a^\square_i) = ( \psi_{i,1} - \theta_i - \lambda_1,..., \psi_{i,k} - \theta_i - \lambda_k)$$
We can write out
\begin{equation*}
\begin{split}
\sum_{i=1}^n \langle q_i - a^\square_i, \nabla_{a^\square_i} d_\phi (p_i, a^\square_i) \rangle &= \sum_{i=1}^n \sum_{j=1}^k (q_{i,j} - a^\square_{i,j})(\psi_{i,j} - \theta_i - \lambda_j) \\
	&=  \sum_{i=1}^n \sum_{j=1}^k \psi_{i,j}(q_{i,j} - a^\square_{i,j}) \\
	&\text{~~~~} + \sum_{i=1}^n \sum_{j=1}^k (q_{i,j} - a^\square_{i,j})(- \theta_i) \\
	&\text{~~~~} + \sum_{i=1}^n \sum_{j=1}^k (q_{i,j} - a^\square_{i,j})(- \lambda_j)\\
\end{split}
\end{equation*}
We know from the earlier proof of Lemma~\ref{lem:three} that the last two terms equal $0$, which leaves us
$$\sum_{i=1}^n \langle q_i - a^\square_i, \nabla_{a^\square_i} d_\phi (p_i, a^\square_i) \rangle = \sum_{i=1}^n \sum_{j=1}^k \psi_{i,j}(q_{i,j} - a^\square_{i,j})$$
Now let's look at what each $\psi_{i,j}$ actually is. The KKT conditions require that each $\psi_{i,j} \geq 0$ and that $\psi_{i,j}(- a^\square_{i,j}) = 0$. This implies that the only times that $\psi_{i,j} \neq 0$ is when $a^\square_{i,j} = 0$ in which case $\psi_{i,j} \geq 0$. \\
In our double sum, we only have to be concerned with the terms that have an $a^\square_{i,j} = 0$ (all the other terms will be $0$, since if $a^\square_{i,j} \neq 0$ then $\psi_{i,j} = 0$). In these cases, $q_{i,j} - a^\square_{i,j} > 0$ since $q_{i,j} \geq 0$ by the constraint. $q_{i,j} - a^\square_{i,j} \geq 0$ and $\psi_{i,j} \geq 0$, so $\sum_{i=1}^n \sum_{j=1}^k \psi_{i,j}(q_{i,j} - a^\square_{i,j}) \geq 0$.
\end{proof} 

\begin{theorem}
Let $d_\phi$ be a Bregman divergence, let $p$ be a set of predictions, and $\pi$ be a class distribution over $k$ classes. Then for any $y\in Q^\square_\pi$ the following holds:
\begin{align*}
&\sum_{i=1}^n (d_\phi(p_{i\cdot},y_{i\cdot}) - d_\phi(a_{i\cdot}^\square, y_{i\cdot})) \geq \\
&\geq\sum_{i=1}^n d_\phi(p_{i\cdot}, a_{i\cdot}^\square) \geq 
\sum_{i=1}^n d_\phi(p_{i\cdot}, a_{i\cdot}^\star)=
\sum_{i=1}^n (d_\phi(p_{i\cdot},y_{i\cdot}) - d_\phi(a_{i\cdot}^\star, y_{i\cdot}))
\end{align*}
\end{theorem}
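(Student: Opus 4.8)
The plan is to read the displayed chain as three independent links and verify each in turn, since all three rest on machinery already available in Lemmas~\ref{lem:two}, \ref{lem:three}, and \ref{lem:four}. Writing $a^\star=\alpha^\star(p,\pi)$, $a^\square=\alpha^\square(p,\pi)$ and abbreviating $p_i=p_{i\cdot}$, $a^\star_i=a^\star_{i\cdot}$, etc. (all sums running over $i=1,\dots,n$), the target is
\[
\underbrace{\sum_i\big(d_\phi(p_i,y_i)-d_\phi(a^\square_i,y_i)\big)}_{(\mathrm{A})}
\;\geq\;
\underbrace{\sum_i d_\phi(p_i,a^\square_i)}_{(\mathrm{B})}
\;\geq\;
\underbrace{\sum_i d_\phi(p_i,a^\star_i)}_{(\mathrm{C})}
\;=\;
\underbrace{\sum_i\big(d_\phi(p_i,y_i)-d_\phi(a^\star_i,y_i)\big)}_{(\mathrm{D})}.
\]

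For the rightmost equality $(\mathrm{C})=(\mathrm{D})$ I would simply invoke Lemma~\ref{lem:three} with the choice $q=y$, which is legitimate because $y\in Q^\square_\pi\subseteq Q_\pi$; after clearing the factor $\tfrac1n$ this is exactly the claimed identity. For the leftmost inequality $(\mathrm{A})\geq(\mathrm{B})$ I would apply Lemma~\ref{lem:two} termwise with its arguments $(p,q,z)$ instantiated as $(p_i,a^\square_i,y_i)$, giving
\[
d_\phi(p_i,y_i)-d_\phi(a^\square_i,y_i)
=\big\langle y_i-a^\square_i,\ \nabla_{a^\square_i}d_\phi(p_i,a^\square_i)\big\rangle+d_\phi(p_i,a^\square_i).
\]
Summing over $i$ and discarding the inner-product sum, which is nonnegative by Lemma~\ref{lem:four} (again using $y\in Q^\square_\pi$), leaves $(\mathrm{A})\geq(\mathrm{B})$.

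The middle inequality $(\mathrm{B})\geq(\mathrm{C})$ is the one link that uses none of the lemmas: it follows purely from the definitions of BGA and UGA as minimizers of the same objective $\tfrac1n\sum_i d_\phi(p_i,a_i)$ over the nested feasible sets $Q^\square_\pi\subseteq Q_\pi$. Minimizing over the smaller set can only yield a value that is larger or equal, so $(\mathrm{C})=\min_{a\in Q_\pi}\sum_i d_\phi(p_i,a_i)\leq\min_{a\in Q^\square_\pi}\sum_i d_\phi(p_i,a_i)=(\mathrm{B})$.

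I expect no genuine obstacle in the assembly itself; the entire analytic content has been front-loaded into Lemma~\ref{lem:four}, whose KKT argument supplies the sign of the inner-product term that drives $(\mathrm{A})\geq(\mathrm{B})$ — this is the step I would regard as the crux. The only point demanding care is bookkeeping of feasible sets: each lemma application requires its vectors to lie in the correct domain, so I would state explicitly at every step that $y\in Q^\square_\pi$ (hence also $y\in Q_\pi$) and that $a^\square\in Q^\square_\pi$, ensuring that the gradient identity of Lemma~\ref{lem:two}, the projection identity of Lemma~\ref{lem:three}, and the nonnegativity conclusion of Lemma~\ref{lem:four} all apply verbatim.
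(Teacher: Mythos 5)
Your proposal is correct and follows essentially the same route as the paper: Lemma~\ref{lem:two} applied with $(p_i,a^\square_i,y_i)$ plus the nonnegativity from Lemma~\ref{lem:four} (with $q=y$) for the left inequality, the nested-feasible-sets argument $Q^\square_\pi\subseteq Q_\pi$ for the middle one, and Lemma~\ref{lem:three} with $q=y$ for the final equality (the paper's closing citation of Lemma~\ref{lem:four} there is evidently a slip for Lemma~\ref{lem:three}, and your attribution is the correct one). Your writeup also fixes a sign typo in the paper's displayed inner product, which should indeed read $\langle y_{i\cdot}-a^\square_{i\cdot},\,\cdot\,\rangle$ as you have it.
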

\begin{proof} Writing out the difference and using the previous Lemma~\ref{lem:four} with $q=y$ gives:
\begin{equation*}
\begin{split}
\sum_{i=1}^n (d_\phi(p_{i\cdot}, y_{i\cdot}) - d_\phi(a_{i\cdot}^\square,y_{i\cdot})) &= \sum_{i=1}^n (\langle a_{i\cdot}^\square-y_{i\cdot}, \nabla_a d_\phi (p_{i\cdot},a_{i\cdot}^\square) \rangle + d_\phi (p_{i\cdot},a_{i\cdot}^\square)) \\
	&=  \sum_{i=1}^n \langle a_{i\cdot}^\square-y_{i\cdot}, \nabla_a d_\phi (p_{i\cdot},a_{i\cdot}^\square) \rangle + \sum_{i=1}^n d_\phi (p_{i\cdot},a_{i\cdot}^\square) \\
	&\geq \sum_{i=1}^n d_\phi (p_{i\cdot},a_{i\cdot}^\square) 
\end{split}
\end{equation*}
We know $\sum_{i=1}^n d_\phi (p_{i\cdot},a_{i\cdot}^\square) \geq \sum_{i=1}^n d_\phi (p_{i\cdot},a^\star_i)$ since $a^\star$ and $a^\square$ are either equal or $a^\square$ would have been chosen over $a^\star$ in $\alpha^\star$'s minimization task.
The rest follows from Lemma~\ref{lem:four}.
\end{proof}

\end{document}